\documentclass{article}

\usepackage{graphicx}
\usepackage[preprint]{neurips_2026} 

\usepackage[utf8]{inputenc}
\usepackage[T1]{fontenc}
\usepackage{hyperref}
\usepackage{url}
\usepackage{booktabs}
\usepackage{multirow}
\usepackage{amsfonts}
\usepackage{amsmath}
\usepackage{amsthm}
\usepackage{caption}
\usepackage{subcaption}
\usepackage{wrapfig}
\usepackage{nicefrac}
\usepackage{microtype}
\usepackage[table]{xcolor}
\usepackage[most]{tcolorbox}

\usepackage{enumitem}

\usepackage{etoc}
\usepackage{titletoc}
\title{LLMs Should Express Uncertainty Explicitly}

\newtheorem{theorem}{Theorem}
\newtheorem{proposition}{Proposition}
\newtheorem{corollary}{Corollary}
\author{%
  Junyu Guo\thanks{\noindent\textbf{Corresponding authors:} Junyu Guo (\texttt{junyuguo24@berkeley.edu}) and Ming Jin (\texttt{jinming@vt.edu}).} \\
  University of California, Berkeley
  \And
  Shangding Gu \\
  University of California, Berkeley
  \And
  Ming Jin$^*$ \\
  Virginia Tech
  \AND
  Costas Spanos \\
  University of California, Berkeley
  \And
  Javad Lavaei \\
  University of California, Berkeley
}

\newcommand{\utok}{%
  \texttt{\textcolor{blue!65!black}{\textbf{<uncertain>}}}%
}

\begin{document}

\maketitle

\begin{abstract}
Large language models (LLMs) often produce confident yet incorrect answers, which can lead to risky failures in real-world applications. We study whether post-training can make a model's self-assessment explicit: when the model is uncertain, can it be trained to signal so within its own response? A central design question is \emph{where} in the response this signal should be exposed --- during reasoning, while the answer is still being formed, or at the end, once the answer has been produced. We study both. For end-of-reasoning self-assessment, we train the model to verbalize a confidence score for its response, with the aim of high confidence on correct answers and low confidence on incorrect ones. For during-reasoning self-assessment, we train the model to emit the marker \texttt{<uncertain>} whenever its current reasoning state appears unreliable. Across factual reasoning tasks, both forms sharply reduce overconfident errors while improving answer quality, and both can be used as triggers for retrieval augmented generation (RAG)  to improve the final response. We further analyze their internal mechanisms: end-of-reasoning verbalized confidence sharpens a confidence-related structure already present in the pretrained model, whereas during-reasoning \texttt{<uncertain>} emission teaches the model to mark high-risk reasoning steps, with parameter changes concentrated in the model's late layers.

\end{abstract}

\begin{figure*}[htbp]
    \centering
    \includegraphics[width=0.85\textwidth]{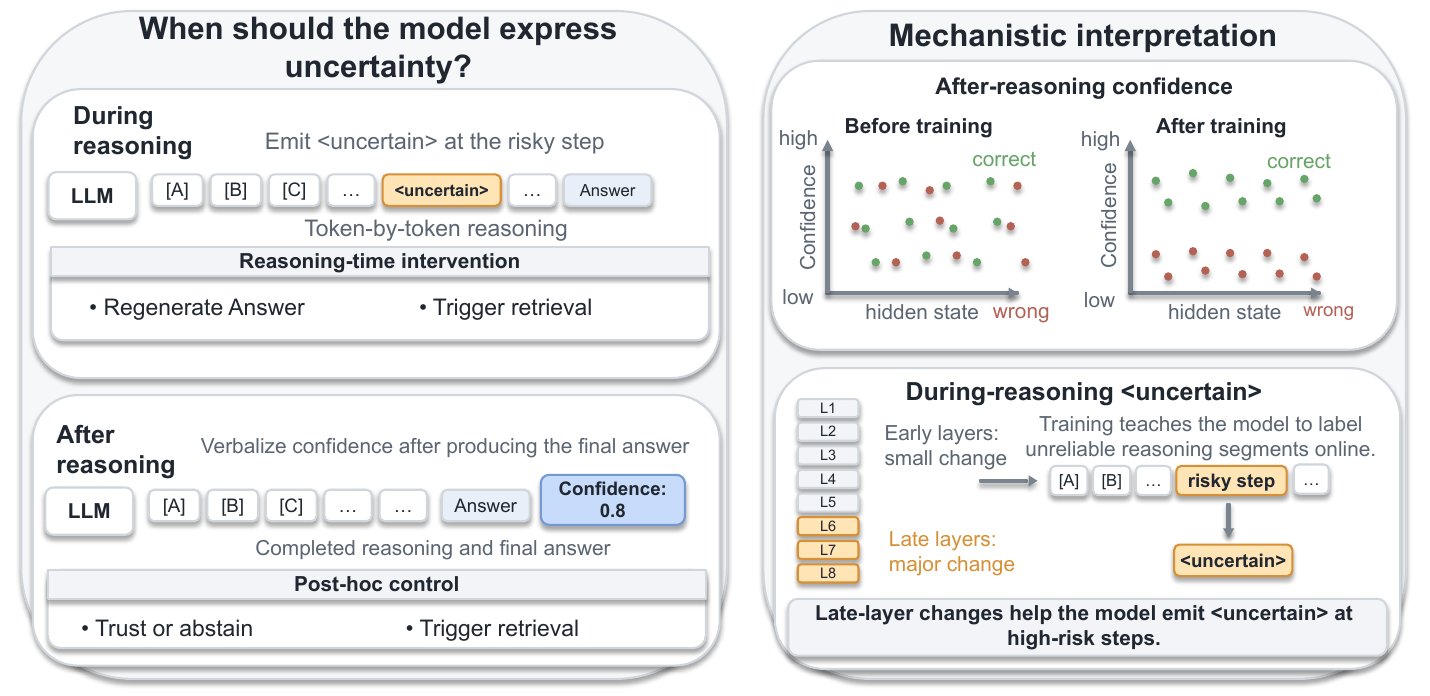}
 \caption{
We train LLMs to express uncertainty at two points in the response: during reasoning by emitting \texttt{<uncertain>} at risky steps, and after reasoning by verbalizing a confidence score for the final answer. 
Both signals can trigger retrieval or abstention, while our analysis suggests that confidence training sharpens an existing confidence-related structure and \texttt{<uncertain>} training teaches the model to mark high-risk reasoning states through late-layer changes.
}
    \label{fig:main_overview}
\end{figure*}

\section{Introduction}

Large language models (LLMs) often produce confidently wrong answers: they may invent facts that do not exist, or insist on answers to questions they cannot truly solve. Ideally, when the model is unable to answer a question correctly, it should signal so within its response, just as a person who is unsure about something would voice that hesitation rather than confidently guess. If the model can self-assess its reasoning quality at test time and communicate it clearly, a downstream controller can intervene by retrieving evidence, asking a clarifying question, invoking a tool, or abstaining.

A common approach to LLM self-assessment is uncertainty quantification, which estimates how reliable a model's response is after it has been generated~\citep{he2025survey,vashurin2025benchmarking}. Hesitation-like tokens and high-entropy transitions can correlate with internal uncertainty~\citep{wang2025beyond}, and adaptive retrieval systems infer when to intervene from confidence scores, entropy statistics, or response features~\citep{jeong2024adaptive,moskvoretskii2025adaptive,su2403dragin,yao2025seakr}. These signals are useful, but they leave a \emph{visibility problem}: downstream controllers must still infer whether the model knows enough to proceed. A model may internally encode that its reasoning path is fragile while still producing a fluent, confident answer; the goal is for the model to expose these latent warning signals in a legible form before they become confidently wrong outputs. Recent uncertainty-aware training methods improve calibration and reasoning behavior~\citep{li2506confidence,wu2025mitigating,zhao2505learning,guo2025stylebench}, but the central question remains how to make the model's self-assessment explicit and actionable, rather than merely estimated post-hoc. We provide a comprehensive discussion of prior work in Appendix~\ref{app:related_work}.

The real bottleneck, however, is exposure: we need the model to communicate its self-assessment in a form an external controller can act on, at the right moment in generation. This naturally raises a design question: at which point in the response should this signal be exposed? A reliability signal at the end of reasoning supports question-level decisions such as trusting or abstaining, while a signal emitted during reasoning supports mid-trajectory intervention before the answer is committed. We therefore study two complementary post-training forms: verbalizing a confidence score after the response is finalized, and emitting an explicit marker \texttt{<uncertain>} during the reasoning trajectory. This view connects to recent work showing that learned tokens can package complex behavior and provide compact handles for control~\citep{mu2023learning,hewitt2025neologism,hewitt2025we}, as well as to reasoning and retrieval systems that train models to mark actions or reasoning states explicitly~\citep{shao2024deepseekmath,zhang2024backtracking,asai2023selfrag,guo2025meta}. Because external intervention is only worthwhile when its expected benefit exceeds its cost~\citep{liu2024much}, a useful exposure must also be selective. These considerations motivate our central research question:
\begin{center}
    \emph{How should LLMs be trained to expose their reasoning reliability, and what does each design choice imply for the resulting model?}
\end{center}

We study two natural design choices for this exposure. The first is to train the model to verbalize a confidence score for its final answer, indicating how reliable the answer is. The second is to train the model to emit an explicit marker, \texttt{<uncertain>}, during reasoning whenever the current step appears unreliable. Both choices reduce overconfident errors and improve answer quality, and both can serve as triggers for adaptive retrieval. They also leave different signatures inside the model: the end-of-reasoning choice sharpens a confidence-related structure already present in the pretrained model, while the during-reasoning choice reshapes the model's later layers to support an explicit signaling state. The two design choices therefore play complementary rather than competing roles, supporting different downstream decisions and engaging different internal mechanisms.

Our main contributions are as follows.

\begin{itemize}[leftmargin=*]
    \item We frame LLM self-assessment as a problem of \emph{exposure}: training the model to express its reasoning reliability within its own response, rather than estimating it externally after the fact.

    \item We study two natural design choices: training the model to verbalize a confidence score after producing its final answer, and training it to emit an explicit \texttt{<uncertain>} marker during reasoning whenever the current step is unreliable.

    \item Both design choices sharply reduce overconfident errors and improve answer quality, and both can serve as triggers for adaptive retrieval to improve the final response.

    \item Through internal mechanism analysis, we show that the two design choices leave different signatures inside the model: end-of-reasoning verbalization sharpens a confidence-related structure already present in the pretrained model, while during-reasoning signaling reshapes the model's later layers to support an explicit signaling state.
\end{itemize}

\section{Preliminaries}

Self-assessment in language models becomes useful for downstream control only when the model communicates it within its own response. We accordingly study two natural design choices, distinguished by \emph{when} the signal is exposed: an end-of-reasoning signal that summarizes the reliability of the final answer, and a during-reasoning signal that marks high-risk steps before the answer is committed. Given an input question $x$, the model generates a reasoning trajectory $z_{1:T} = (z_1,\dots,z_T)$,
with hidden states $h_t = f_\theta(h_{t-1}, x, z_{<t})$, $t=1,\dots,T$.
The final response induces an answer $\hat{y}$, and we write $Y \in \{0,1\}$ for its correctness indicator. We assume that the hidden trajectory $h_{1:T}$ carries not only task-relevant semantic information but also latent self-assessment information about whether the current reasoning path is reliable. Our goal is to train the model to expose this information explicitly.

The end-of-reasoning signal is a scalar confidence produced after the trajectory is complete: $c = R_{\mathrm{end}}(h_{1:T})$, where $c \in [0,1]$ is intended to summarize final-answer reliability, ideally approximating $\mathbb{P}(Y=1 \mid h_{1:T})$. The during-reasoning signal is a step-level marker emitted while the response is being generated: $a_t = R_{\mathrm{during}}(h_t) \in \{0,1\}$,
where $a_t = 1$ indicates that the model has entered a high-risk reasoning state at step $t$. In our setting, this during-reasoning signal is instantiated by emitting the string \texttt{<uncertain>}.

The two design choices are not interchangeable. An end-of-reasoning signal is a trajectory-level summary: it compresses the reliability of a completed response into a single scalar, which is suitable for selective prediction, abstention, and question-level retrieval gating. However, because reasoning trajectories often contain a single unreliable step surrounded by reliable ones, a scalar score cannot identify which step caused the risk; by the time the score is computed, the answer has already been finalized. A during-reasoning signal addresses this loss of temporal information by surfacing the unreliable step at the moment it arises, before the model commits to an answer. The two design choices therefore support different downstream control decisions and are studied in turn: Section~\ref{sec:verbal-interface} studies the end-of-reasoning signal trained by verbalized confidence, with the trajectory-reweighting interpretation of this objective deferred to Appendix~\ref{app:traj_reweight_proofs}. Section~\ref{sec:local-interface} studies the during-reasoning signal trained by emitting the explicit \texttt{<uncertain>} marker.

\section{Verbalized Confidence}
\label{sec:verbal-interface}

We first study the end-of-reasoning design choice, where the model produces a scalar estimate of the correctness of its final answer. Our goal is not merely to improve calibration metrics, but to understand how such a signal can be learned without degrading the underlying reasoning process. Our central hypothesis is that the pretrained model already contains a weak confidence-related structure in its hidden trajectory but does not express that structure faithfully at the output level. Calibration training should therefore sharpen this existing structure rather than replace the underlying reasoning policy. Concretely, given a reasoning trajectory with hidden states \(h_{1:T}\), the model learns to map this trajectory-level evidence to a confidence \(c\) that better approximates the probability that the final answer is correct.

To test this hypothesis, we train the model with a simple confidence-aware reward:
\(r(x,y,p)=2p-p^2\) if the final answer is correct and \(r(x,y,p)=-p^2\) otherwise. This
directly rewards justified confidence and penalizes overconfident errors, and is
applied only after the full reasoning trajectory is completed. The key intuition is that GRPO should suppress confident wrong trajectories and amplify confident correct ones, improving the model's self-assessment without a separate supervised label. To make this precise, consider a local reweighting view: under a small GRPO-style update, the post-update policy can be approximated as
\begin{equation}
    \pi_{\theta'}(z\mid x)
\;\propto\;
\pi_{\theta}(z\mid x)\exp(\eta\, r(z;x)),\label{approx_dynamics}
\end{equation}
where $z$ is a complete reasoning trajectory for input $x$ and $\eta>0$ is an effective step size. For any two trajectories $z_1,z_2$ for the same question, this implies
\begin{equation}
    \log \frac{\pi_{\theta'}(z_1\mid x)}{\pi_{\theta'}(z_2\mid x)}
=
\log \frac{\pi_{\theta}(z_1\mid x)}{\pi_{\theta}(z_2\mid x)}
+
\eta\bigl(r(z_1;x)-r(z_2;x)\bigr).\label{update_trajectory}
\end{equation}
Higher-confidence errors are thus downweighted more strongly while higher-confidence correct answers are amplified, and the update only redistributes mass among existing trajectories. Appendix~\ref{app:traj_reweight_proofs} formalizes this support-preserving reweighting; Appendix~\ref{app:reward_ablation_transfer} studies alternative reward choices.


On the calibration evaluation, training preserves response quality while sharply improving self-assessment quality: accuracy rises slightly from \(0.345\) to \(0.358\), while ECE drops from \(0.383\) to \(0.049\), Brier score from \(0.504\) to \(0.166\), NLL from \(4.987\) to \(0.498\), and the overconfidence gap from \(+0.523\) to \(+0.045\). The full metric table is deferred to Appendix~\ref{app:verbalized_confidence_details} (Table~\ref{tab:calibration}). More importantly, calibration fundamentally changes the \emph{failure mode} of the model. The baseline is dominated by confidently wrong predictions, whereas the calibrated model assigns substantially lower confidence to incorrect answers. This indicates that calibration does not simply rescale confidence, but suppresses overconfident error without degrading reasoning accuracy. Training dynamics are reported in Appendix~\ref{app:verbalized_confidence_details}, including the reward curve in Figure~\ref{fig:training_curve}; reward-format ablations and Qwen cross-family results are reported in Appendix~\ref{app:reward_ablation_transfer} (Tables~\ref{tab:reward_ablation_compact} and~\ref{tab:cross_family_appendix}).

\paragraph{Mechanism Evidence.}
We complement the headline calibration result with two views of the confidence-token hidden state: a logit lens that aggregates the predicted digits into \textsc{Low}, \textsc{Mid}, and \textsc{High} bins (Figure~\ref{fig:calibration_mechanism_main}), and a PCA projection of the final-layer activations (Figure~\ref{fig:pca_confidence_main}). Both views tell the same story. In the base model, correct and wrong answers both end with dominant mass in the \textsc{High} bin, and the PCA geometry is broad and diffuse. After calibration, low-confidence errors are redirected away from \textsc{High} toward \textsc{Low}, correct answers become more conservative rather than saturating the maximum digit, and the PCA geometry becomes smoother and more ordered along a low-to-high confidence axis. These observations are consistent with calibration acting on a late-stage mapping: the underlying signal already exists in the pretrained model, and training makes its translation into verbalized confidence more selective and cleanly separated. Detailed digit-level routing is deferred to Appendix~\ref{app:verbalized_confidence_details} (Figure~\ref{fig:app_calibration_routing}).

\begin{figure}[t]
    \centering
    \captionsetup[subfigure]{aboveskip=2pt, belowskip=0pt}
    \begin{subfigure}[t]{0.42\linewidth}
        \centering
        \includegraphics[width=\linewidth]{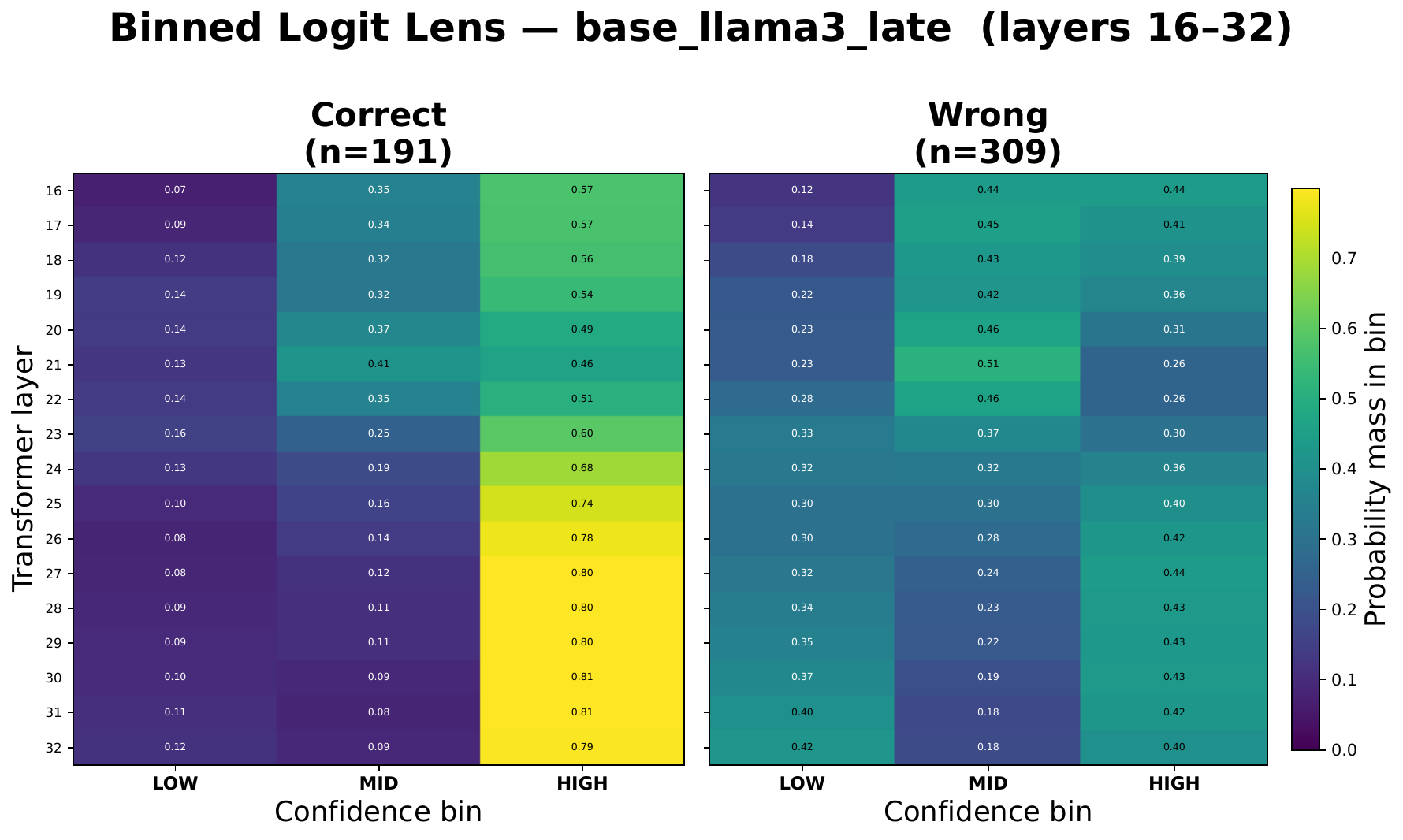}
        \caption{Base}
        \label{fig:heatmap_base}
    \end{subfigure}
    \hfill
    \begin{subfigure}[t]{0.42\linewidth}
        \centering
        \includegraphics[width=\linewidth]{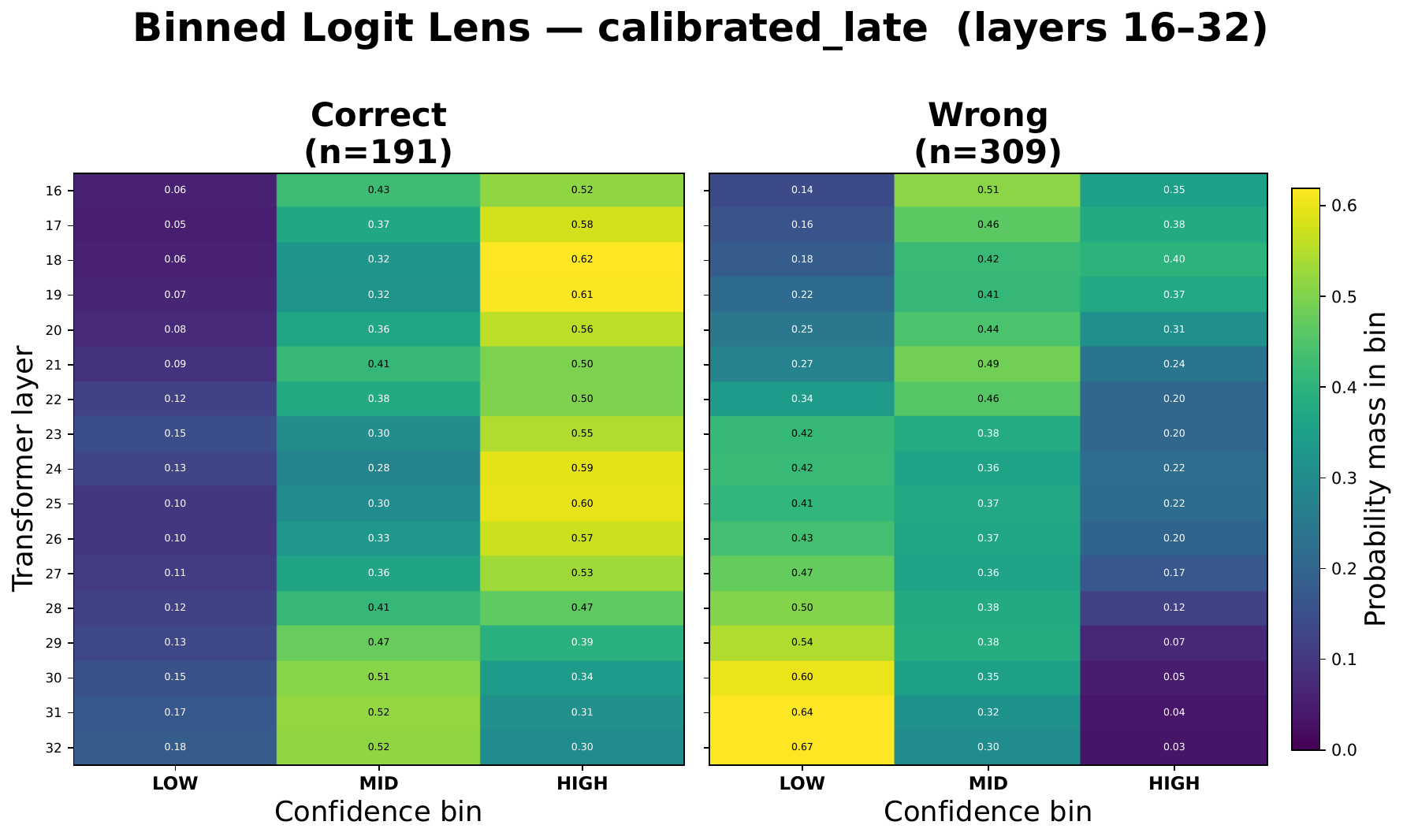}
        \caption{Calibrated}
        \label{fig:heatmap_calibrated}
    \end{subfigure}
    \captionsetup{aboveskip=4pt, belowskip=0pt}
    \caption{Logit-lens analysis of the confidence-token hidden state. Calibration sharpens late-layer confidence routing and yields a cleaner final-layer confidence structure.}
    \label{fig:calibration_mechanism_main}
    \vspace{-0.8em}
\end{figure}

\begin{figure}[t]
    \centering
    \captionsetup[subfigure]{aboveskip=2pt, belowskip=0pt}
    \begin{subfigure}[t]{0.42\linewidth}
        \centering
        \includegraphics[width=\linewidth]{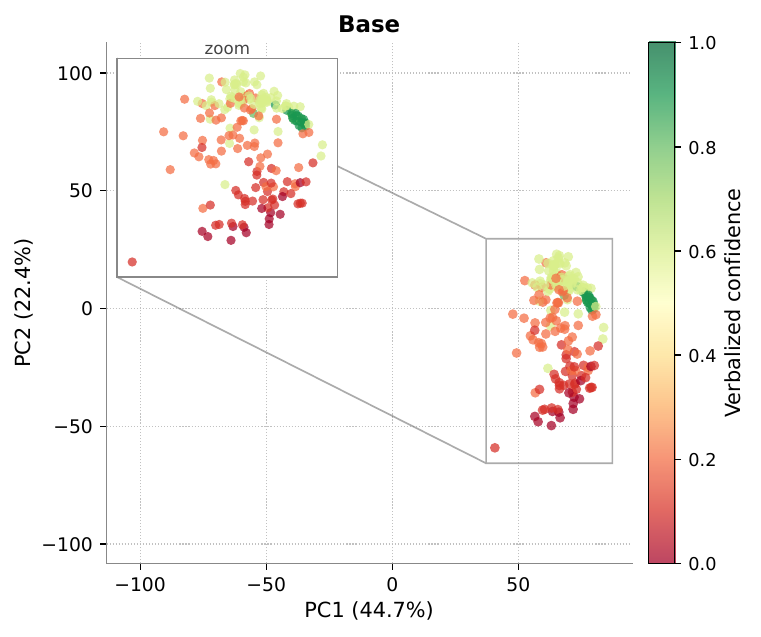}
        \caption{Base model}
        \label{fig:pca_conf_base}
    \end{subfigure}
    \hfill
    \begin{subfigure}[t]{0.42\linewidth}
        \centering
        \includegraphics[width=\linewidth]{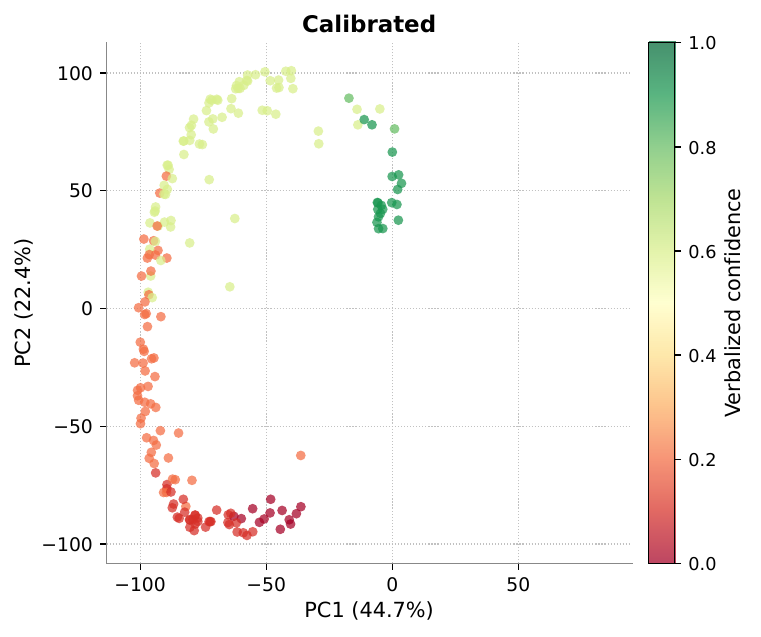}
        \caption{Calibrated model}
        \label{fig:pca_conf_cal}
    \end{subfigure}
    \captionsetup{aboveskip=4pt, belowskip=0pt}
    \caption{PCA analysis of the confidence-token hidden state, colored by verbalized confidence.}
    \label{fig:pca_confidence_main}
    \vspace{-0.8em}
\end{figure}
\paragraph{Error Analysis of the Verbalized Confidence Model.} We next analyze how calibration changes the \emph{type} of errors the model makes. We define \emph{epistemic} errors as wrong answers with confidence above $0.5$, and \emph{aleatoric} errors as wrong answers with confidence at most $0.5$. We also report a stricter epistemic category with confidence above $0.7$, which isolates strongly overconfident hallucinations.

We classify incorrect responses as epistemic or aleatoric using an LLM judge that reads the reasoning text and explicitly ignores the final confidence value; the full prompt is shown in Appendix~\ref{app:judge_error_prompt}.

Table~\ref{tab:vc_error_aggregate} shows the sharpest qualitative shift in the section. In the baseline, almost all errors are epistemic and most are strongly overconfident. After calibration, the majority of errors become low-confidence errors, and the strict epistemic rate drops by more than an order of magnitude. This is the main behavioral conclusion of verbalized confidence: the model changes from being confidently wrong to being uncertain when wrong. Detailed confidence-band, per-dataset conversion, and separation analyses are reported in Appendix~\ref{app:verbalized_confidence_details} (Table~\ref{tab:vc_consistency_and_error_group}).

The same conversion holds across datasets, though its strength varies. The largest reductions occur on MuSiQue and HotpotQA, while Natural Questions remains the hardest case: strongly overconfident errors nearly disappear, but some mistakes remain in the moderate-confidence range.  
\begin{wraptable}[6]{r}{0.25\linewidth}
\vspace{-0.4\baselineskip}
\centering
\scriptsize
\setlength{\tabcolsep}{3.pt}
\begin{tabular}{@{}lcc@{}}
\toprule
\textbf{Error type} & \textbf{Base} & \textbf{Cal.} \\
\midrule
Epistemic & 92.4\% & 34.9\% \\
Aleatoric & 7.6\% & 65.1\% \\
Strict epi. & 88.6\% & 3.9\% \\
\bottomrule
\end{tabular}
\refstepcounter{table}\label{tab:vc_error_aggregate}
\par\vspace{2pt}
\parbox{\linewidth}{\centering\scriptsize\textbf{Table \thetable.} Aggregate error decomposition.}
\vspace{-0.4\baselineskip}
\end{wraptable}
Appendix~\ref{app:verbalized_confidence_details} also shows that calibration increases confidence separation between correct and incorrect answers, rather than simply shifting all scores downward. Overall, these results show that verbalized confidence becomes a calibrated reliability signal by suppressing overconfident errors without materially rewriting the underlying reasoning process.

\section{Reasoning-Time \texttt{<uncertain>} Marker}
\label{sec:local-interface}

The previous section studied self-assessment exposed \emph{after} generation
through verbalized confidence.
We now consider the complementary case in which the model exposes its
self-assessment \emph{during} reasoning.
The goal here is not to estimate the probability that the final answer is
correct, but to mark specific points along the trajectory where the current
reasoning state appears unreliable, at which retrieval or correction
can still change the outcome.

Concretely, we train the model to emit the explicit marker \texttt{<uncertain>}
whenever it encounters such a high-risk state during generation.
This marker is a during-reasoning signal: it does not summarize
final correctness after the fact, but exposes candidate intervention points
before the model has fully committed to an answer.
In Adaptive RAG settings, this is exactly the granularity needed: the
signal arrives in the middle of reasoning, when there is still time to act.

\subsection{\texttt{<uncertain>}-Based Training for Factual Reasoning and
Retrieval Control}

\paragraph{Setup and objective.}
We train the model with GRPO to emit the explicit marker \texttt{<uncertain>} whenever it enters a high-risk reasoning state, while still ending each response with an explicit final answer. We treat each occurrence of this marker in the decoded response as a candidate control point, and a lightweight hidden-state probe decides whether retrieval should actually be triggered.

The training instruction is:
\begin{quote}
\small
You are a helpful reasoning assistant. Think step by step. If at any point
you are uncertain about a fact, emit the special marker \texttt{<uncertain>}
to signal that you need more information. End your response with `Answer:
<your answer>' on the last line.
\end{quote}

Correctness is determined from the final answer line using normalized exact
match, with yes/no matching, date matching, and token-F1 fallback. The reward
is ordered as
\begin{equation}
r(\text{correct, no emit})
>
r(\text{correct, emit})
>
r(\text{wrong, emit})
>
r(\text{wrong, no emit}),
\label{reward_func_uncertain}
\end{equation}
with concrete values \(5.0 > 3.5 > 0.0 > -2.0\)  and an additional repetition penalty when \texttt{<uncertain>} appears more than twice. The key asymmetry is that silent failure is penalized more heavily than uncertain failure, so the model is encouraged to expose likely failure states rather than remain silently overconfident. Unlike the verbalized-confidence objective, which trains a post-hoc summary, this objective acts directly on the reasoning trajectory and is designed to produce an intervention-oriented signal. Reward-design ablations are reported in Appendix~\ref{app:reward_ablation_transfer} (Table~\ref{tab:reward_ablation_compact}).

\newsavebox{\firstemitbox}
\begin{figure}[t]
    \centering
    \sbox{\firstemitbox}{\includegraphics[width=0.48\linewidth]{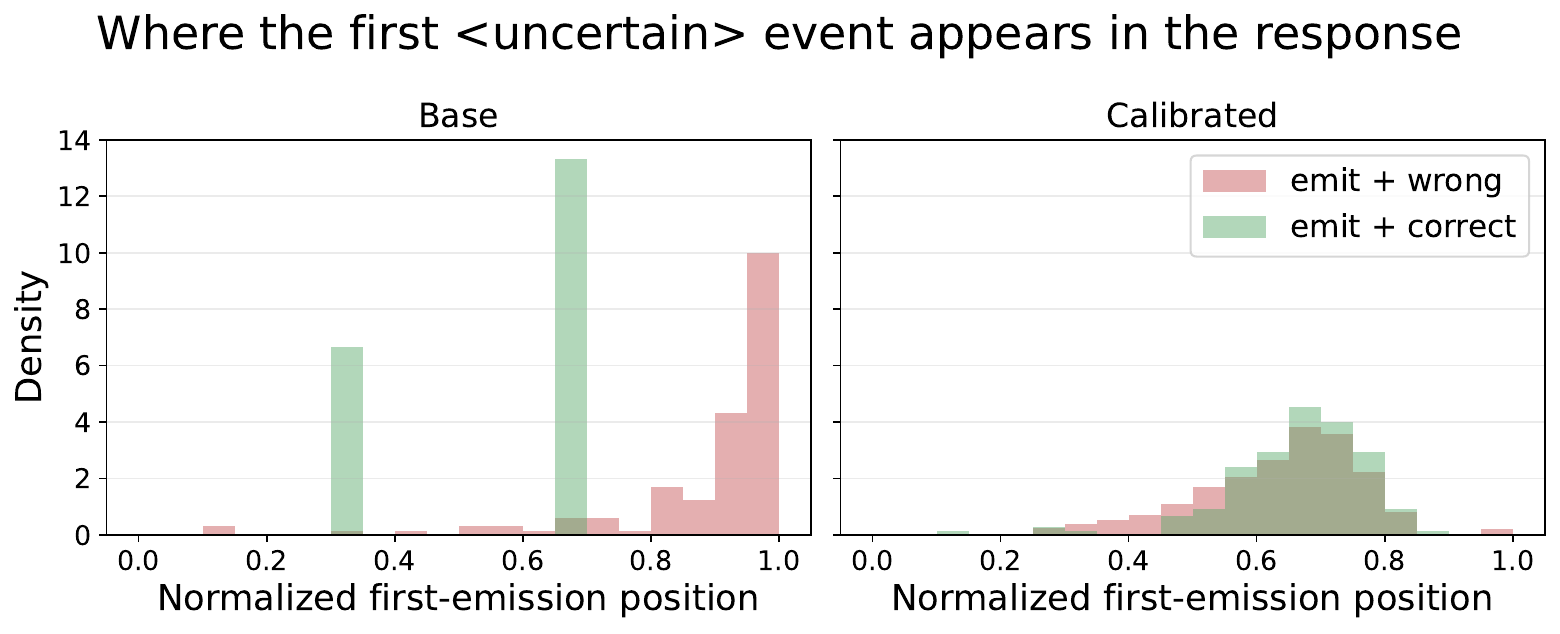}}%
    \begin{minipage}[b]{0.48\linewidth}
        \centering
        \usebox{\firstemitbox}
        \captionof{figure}{First \texttt{<uncertain>} emission position as a fraction of response length.}
        \label{fig:first_emit}
    \end{minipage}
    \hfill
    \begin{minipage}[b]{0.48\linewidth}
        \centering
        \vbox to \ht\firstemitbox{%
            \vfill
            \scriptsize
            \renewcommand{\arraystretch}{1.0}%
            \begin{tabular*}{\linewidth}{@{\extracolsep{\fill}}lcc}
                \toprule
                \textbf{Metric} & \textbf{Base(\%)} & \textbf{Cal(\%)} \\
                \midrule
                Trigger precision & 53.5 & 83.2 \\
                Wrong coverage & 15.1 & 88.2 \\
                Epistemic + emit & 35.0 & 80.1 \\
                Epistemic + no emit & 48.5 & 4.3 \\
                \bottomrule
            \end{tabular*}
            \vfill
        }%
        \captionof{table}{Marker behavior summary.}
        \label{tab:local_behavior_summary}
    \end{minipage}
\end{figure}
Figure~\ref{fig:first_emit} shows where in the response the model first emits \texttt{<uncertain>}. Emissions are distributed across the full range of response positions, not clustered near the end. This confirms that the training objective has successfully instilled mid-reasoning signaling: the model raises the flag while reasoning is still in progress, not after the trajectory has already been committed to. Across six factual reasoning datasets, the calibrated model improves macro-average answer accuracy from \(17.67\%\) to \(28.53\%\), raises answer-line completion from \(58.90\%\) to \(99.93\%\), and increases the fraction of wrong answers that co-occur with \texttt{<uncertain>} emission from \(37.97\%\) to \(58.70\%\). This means the model not only answers more accurately, but also surfaces a much larger share of failures as explicit intervention candidates. Per-dataset breakdowns are reported in Appendix~\ref{app:local_uncertainty_details} (Table~\ref{tab:factual_six_task_compare_combined}); Appendix~\ref{app:reward_ablation_transfer} (Table~\ref{tab:cross_family_appendix}) further shows that the same marker recipe transfers to a Qwen2.5-7B-Instruct model with nearly identical recognized-error rate. Representative four-way examples are shown in Appendix~\ref{app:samples-uncertain}.

\paragraph{Hidden-State Probe for Retrieval Triggering.}
Finally, we test whether the emitted marker exposes a useful internal state
rather than only a surface artifact. A lightweight probe trained on hidden states
around the first \texttt{<uncertain>} emission predicts final-answer wrongness,
with the strongest signal appearing in middle layers. This supports the view
that the marker reveals a structured reasoning-time self-assessment
state that can be used for downstream intervention. We keep the probe as
supporting evidence rather than a separate contribution; the full feature
construction, emitted-subset composition, layer sweep, and probe curve are
reported in Appendix~\ref{app:local_uncertainty_details}
(Figure~\ref{fig:app_probe_layer_results}, Table~\ref{tab:probe_subset_stats},
and Table~\ref{tab:probe_layer_sweep}).

From the perspective of Adaptive RAG, the key quantity is overall wrong-answer coverage on the full dev set, not just probe accuracy inside the emitted subset. Table~\ref{tab:local_behavior_summary} shows that the calibrated pipeline sends \(576/653\) wrong dev answers to retrieval, covering \(88.2\%\) of failures, whereas the base pipeline covers only \(128/848\) (\(15.1\%\)). On the matched test set, a heuristic error-type split shows the same qualitative shift: silent epistemic errors fall from \(48.5\%\) to \(4.3\%\) of wrong answers, while epistemic errors with \texttt{<uncertain>} rise from \(35.0\%\) to \(80.1\%\). Thus, the training does not mainly resolve ambiguity; it turns previously silent factual failures into explicit intervention signals. Overall, the marker functions as a high-recall reasoning-time signal, complementary to verbalized confidence: the verbalized confidence score summarizes final-answer reliability, while \texttt{<uncertain>} exposes points where the model should retrieve or intervene before committing.

\section{Internal Mechanism Analysis}

A central question is whether these gains reflect more than surface-level reward optimization: how can training substantially improve self-assessment quality without degrading reasoning quality? To investigate this, we focus on two complementary analyses: where the training-induced changes concentrate across token positions, and how strongly the model's internal representations are altered. Taken together, these analyses suggest that the self-assessment signal is constructed in a distributed way along the reasoning trajectory and becomes observable only at designated output positions. The two design choices, however, expose this latent signal differently. Verbalized confidence largely preserves representation geometry while sharpening a confidence-related structure already present in the pretrained model. By contrast, the \texttt{<uncertain>} marker induces a broader internal state that reshapes late-layer representations before producing an explicit emission.

\paragraph{Localization: at which positions does training act?}
We compute the token-level KL divergence between base and calibrated model
distributions at every position in the assistant turn, and group positions by
their semantic type (confidence digit, structural label, reasoning token,
\texttt{<uncertain>} position, nearby context).
This directly reveals which positions absorb the distributional change.

\begin{figure*}[t]
    \centering
    \captionsetup[subfigure]{aboveskip=2pt, belowskip=0pt}
    \begin{subfigure}[b]{0.48\textwidth}
        \centering
        \includegraphics[width=\linewidth]{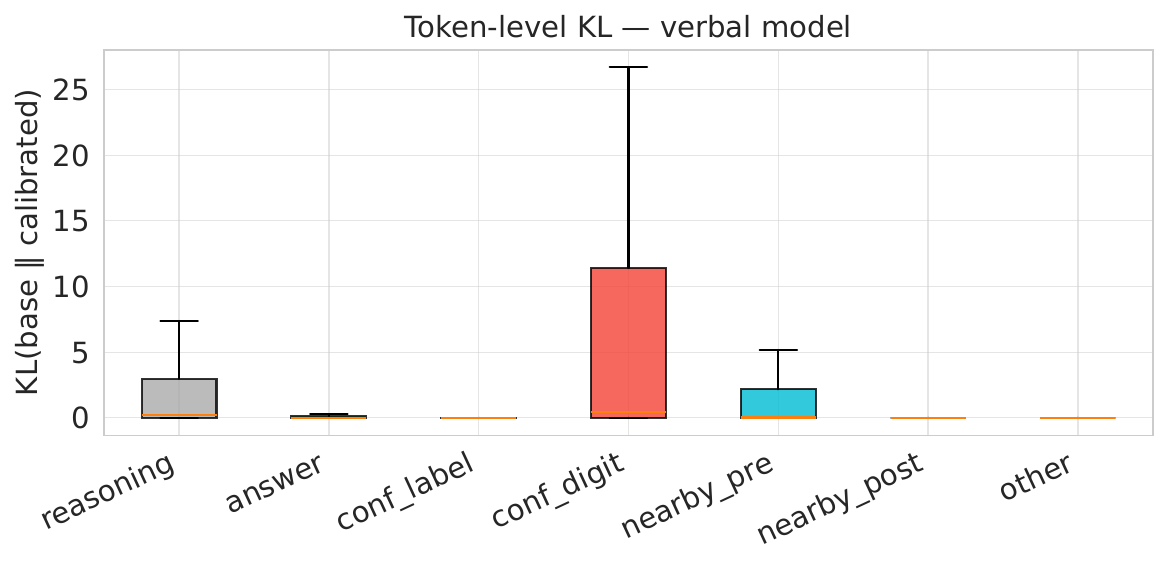}
        \caption{Verbalized confidence}
        \label{fig:main-exp1-verbal}
    \end{subfigure}
    \hfill
    \begin{subfigure}[b]{0.48\textwidth}
        \centering
\includegraphics[width=\linewidth]{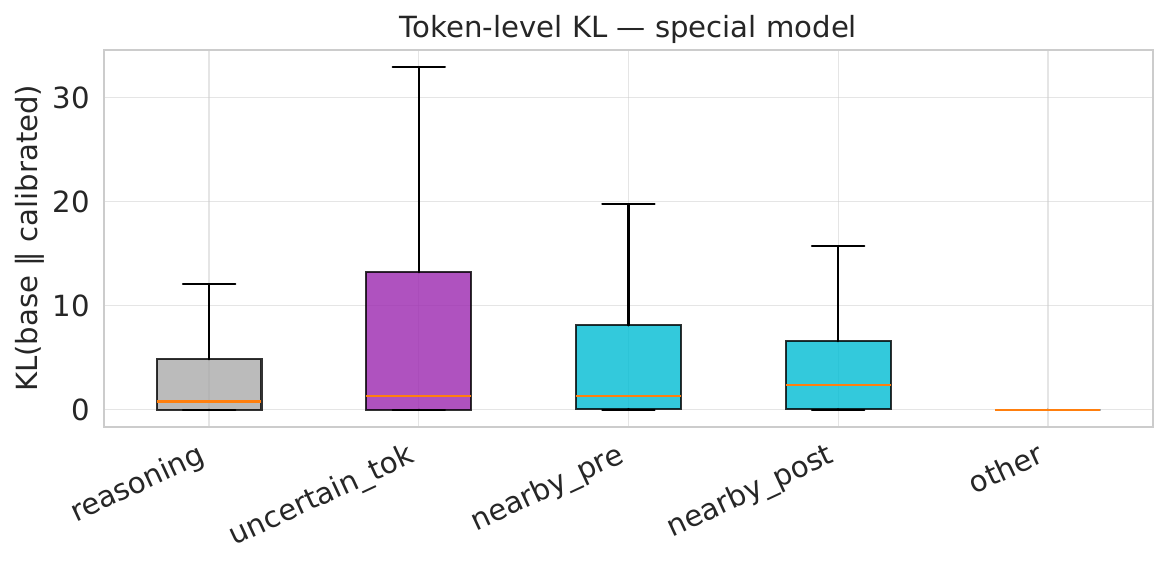}
        \caption{\texttt{<uncertain>} marker}
        \label{fig:main-exp1-special}
    \end{subfigure}
    \captionsetup{aboveskip=4pt, belowskip=0pt}
    \caption{Token-level KL by position type. Both objectives concentrate distributional change at their signal positions, but the \texttt{<uncertain>} marker has a broader local footprint.}
    \label{fig:main-localization}
    \vspace{-0.8em}
\end{figure*}

Figure~\ref{fig:main-localization} shows that both training objectives
successfully localize their effect at the intended output position.
Verbalized confidence produces a point-like signature: only the digit
token is changed, leaving the surrounding format and all reasoning tokens
largely unaffected.
The \texttt{<uncertain>} marker produces a wider footprint: KL is elevated
not just at the emission token but in the tokens immediately surrounding it,
indicating that the explicit signal is preceded by a change in the model's
local computation state.
Localization is therefore a property of both design choices. Additional
hidden-state results, reported in Appendix~\ref{app:patching},
provide supporting evidence that the signal position is better interpreted as
an exposure point than as a self-contained causal circuit. We treat those
results as suggestive rather than definitive, since the intervention changes
only a single token state.

\paragraph{Representation geometry: how deeply does training rewrite
the model?}
We measure this using Centered Kernel Alignment (CKA), which compares the
geometry of hidden representations at signal-token positions between the base
and calibrated model, layer by layer.
A CKA value of $1.0$ means the representations are geometrically identical;
values below $1.0$ indicate structural divergence.

\begin{figure*}[t]
    \centering
    \captionsetup[subfigure]{aboveskip=2pt, belowskip=0pt}
    \begin{subfigure}[b]{0.48\textwidth}
        \centering
        \includegraphics[width=\linewidth]{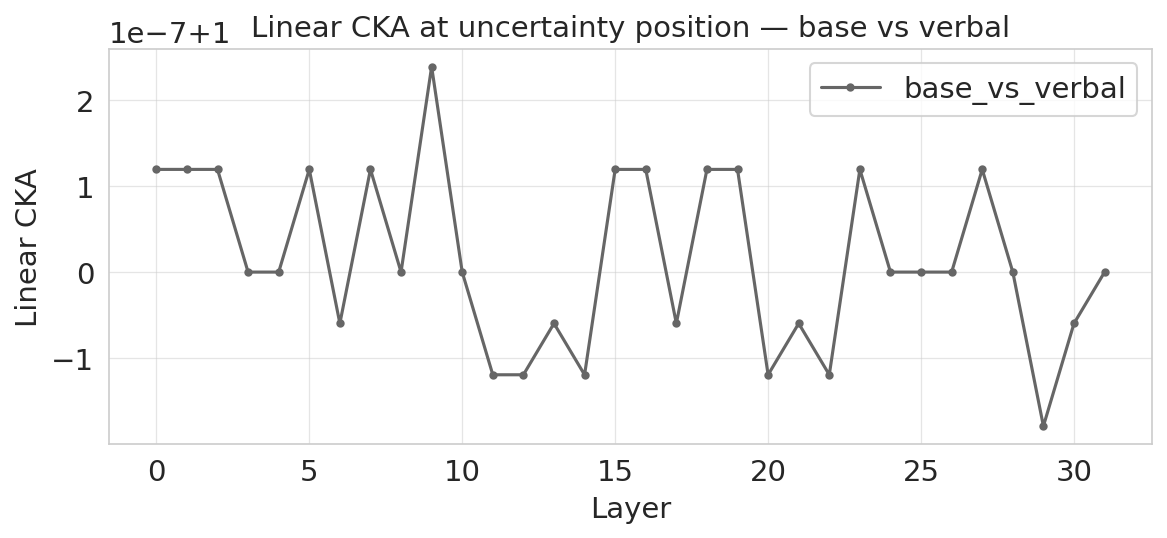}
        \caption{Verbalized confidence}
        \label{fig:main-exp3-verbal}
    \end{subfigure}
    \hfill
    \begin{subfigure}[b]{0.48\textwidth}
        \centering
        \includegraphics[width=\linewidth]{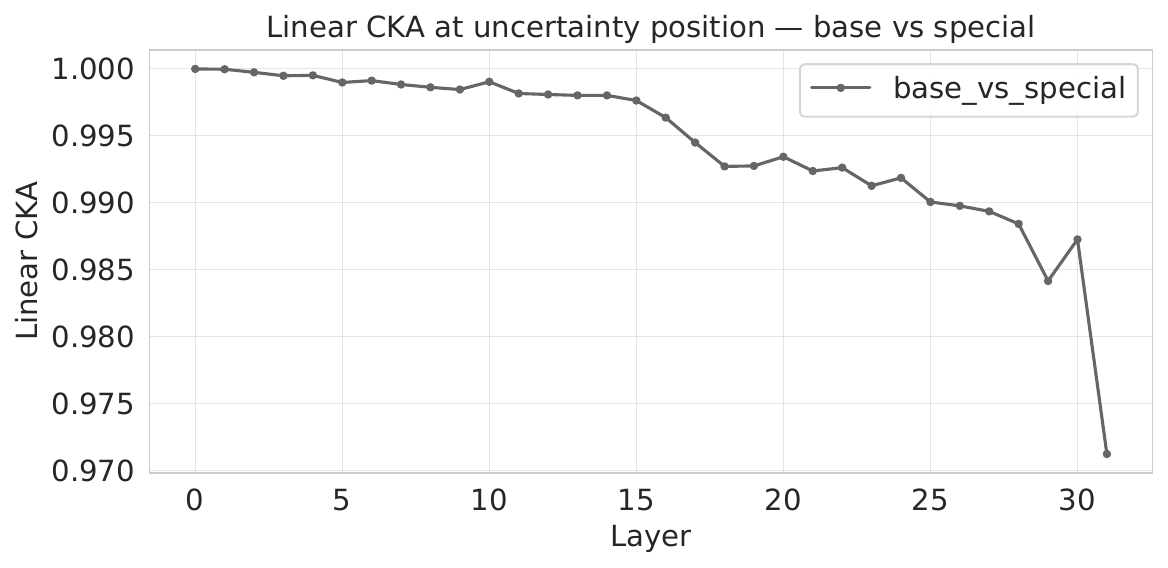}
        \caption{\texttt{<uncertain>} marker}
        \label{fig:main-exp3-special}
    \end{subfigure}
    \captionsetup{aboveskip=4pt, belowskip=0pt}
    \caption{Layer-wise CKA between base and calibrated models. Verbalized confidence preserves representation geometry, whereas the \texttt{<uncertain>} marker induces increasing late-layer divergence.}
    \label{fig:main-cka}
    \vspace{-0.8em}
\end{figure*}

Figure~\ref{fig:main-cka} provides the clearest contrast between the two
design choices.
Verbalized confidence achieves a large improvement in calibration quality while leaving the representation geometry nearly unchanged under this CKA diagnostic: the CKA curve remains close to 1.0 from input to output layer. This suggests that the model did not need to build a substantially new representation from scratch; rather, calibration sharpens and organizes an existing confidence-related geometry on top of the pretrained representation.
The \texttt{<uncertain>} marker takes a different path: late-layer
representations diverge progressively, indicating that explicit mid-reasoning
emission requires the model to actively build a new internal state, not just
refine an existing output mapping.

An important implication is that raw parameter movement is not sufficient to
explain behavioral interference.
Appendix~\ref{app:drift} shows that the two calibrated models exhibit broadly
similar parameter-space drift patterns, concentrated in attention
\texttt{v\_proj}/\texttt{o\_proj} and MLP projections, with little drift in
LayerNorm terms (Figure~\ref{fig:app-exp4-drift}).
Yet these similarly sized and similarly located updates have sharply different
representation-level consequences: verbalized confidence preserves geometry,
whereas the \texttt{<uncertain>} marker rewrites late-layer states.
The key distinction between the two design choices is therefore not how much they
update the model, but whether the objective can be realized by sharpening an
existing confidence-related structure or instead requires building a new
internal state for explicit signaling. Additional patching and per-example
linkage analyses are deferred to Appendix~\ref{app:patching} and
Appendix~\ref{app:linkage}.

\section{Evaluation}
We evaluate on five factual QA benchmarks spanning multi-hop reasoning and open-domain recall: HotpotQA~\cite{yang2018hotpotqa}, MuSiQue~\cite{trivedi2022musique}, 2WikiMultihopQA~\cite{ho2020constructing}, Natural Questions~\cite{kwiatkowski2019natural}, and TriviaQA~\cite{joshi2017triviaqa}. The goal is not only to improve answer quality, but to test whether trained self-assessment signals outperform simpler recalibration, detection, and retrieval-control alternatives.
\subsection{Calibration Evaluation}
Sections~\ref{sec:verbal-interface} and~\ref{sec:local-interface} showed the native effects of the two design choices. We now ask whether those gains can be explained by simpler alternatives; implementation details for all baselines are in Appendix~\ref{app:baseline_impl_details}.

In Panel~A, \textbf{P(True)}~\cite{kadavath2022language} re-queries the model with a binary correctness prompt; \textbf{Global TS}~\cite{guo2017calibration} and \textbf{ATS}~\cite{xie2024calibrating} are post-hoc temperature scaling (single or input-dependent) on the base model's confidences; \textbf{SFT-Conf}~\cite{kapoor2024large} and \textbf{SFT-KWDK}~\cite{luo2025your} supervised-fine-tune the model to reproduce a continuous F1-derived target or a four-bucket label. In Panel~B, \textbf{Emit heur.} prompts the untrained base to emit \texttt{<uncertain>}; \textbf{Hidden probe} and \textbf{Output clf.} are passive wrongness detectors using base-model hidden states or surface response features; \textbf{Self-RAG}~\cite{asai2023selfrag}, \textbf{FLARE}~\cite{jiang2023flare}, and \textbf{ADARAGUE}~\cite{moskvoretskii2025adaptive} are retrieval-controller analogs whose retrieval signals we map to binary triggers.

\begin{table*}[t]
\centering
\scriptsize
\renewcommand{\arraystretch}{0.97}
\setlength{\tabcolsep}{3.5pt}
\begin{tabular}{lccccc|lccccc}
\toprule
\multicolumn{6}{c|}{\textbf{A. Verbalized confidence}} &
\multicolumn{6}{c}{\textbf{B. \texttt{<uncertain>} marker}} \\
\midrule
Method & EM & F1 & Brier & ECE & OConf &
Method & Emit & Prec. & Recall & Acc$_{\neg t}$ & Wrong/Pos. \\
\midrule
Base & 24.5 & 37.3 & -0.108 & 0.357 & 88.5 &
Emit heur. & 0.336 & \textbf{0.959} & 0.444 & 0.392 & 0.726 \\
P(True) & 24.4 & 37.2 & -0.096 & 0.340 & 39.7 &
Hidden probe & 0.699 & 0.889 & 0.856 & 0.653 & 0.726 \\
Global TS & 24.5 & 37.3 & +0.116 & 0.185 & 69.4 &
Output clf. & 0.925 & 0.754 & \textbf{0.961} & 0.622 & 0.726 \\
ATS & 24.5 & 37.3 & +0.123 & 0.166 & 53.4 &
Self-RAG & 0.444 & 0.861 & 0.478 & 0.250 & 0.799 \\
SFT-Conf & 21.1 & 33.5 & +0.083 & 0.226 & 7.3 &
FLARE & 0.586 & 0.738 & 0.598 & 0.300 & 0.722 \\
SFT-KWDK & 22.4 & 34.5 & +0.105 & 0.204 & 8.6 &
ADARAGUE & 0.527 & 0.216 & 0.687 & 0.690 & \textbf{0.166} \\
\textbf{Ours} & \textbf{27.4} & \textbf{38.2} & \textbf{+0.210} & \textbf{0.036} & \textbf{3.2} &
\textbf{Ours} & 0.592 & 0.799 & 0.883 & \textbf{0.719} & 0.528 \\
\bottomrule
\end{tabular}
\caption{Panel A evaluates verbalized confidence; OConf is the percentage of wrong answers with confidence \(>0.5\). Panel B evaluates \texttt{<uncertain>} marker triggers; Prec. = P(wrong \(\mid\) trigger), Recall = P(trigger \(\mid\) wrong), and Acc$_{\neg t}$ is accuracy on untriggered examples.}
\label{tab:combined_baselines}
\end{table*}

\paragraph{Verbalized confidence.}
Three patterns emerge from Panel~A. First, prompting-based self-evaluation (P(True)) reduces overconfident wrong answers (OConf $88.5 \to 39.7$) but barely changes calibration (ECE $0.357 \to 0.340$), suggesting that self-querying alone redistributes confidence rather than aligning it with correctness. Second, post-hoc temperature scaling (Global TS, ATS) sharply improves ECE (down to $0.166$--$0.185$) but leaves a majority of wrong answers still over-confident (OConf $53.4$--$69.4$); temperature simply tilts the entire confidence distribution rather than fixing the overconfident-error pattern. Third, supervised fine-tuning (SFT-Conf, SFT-KWDK) does suppress overconfidence (OConf $7.3$--$8.6$), but at the cost of answer accuracy (EM drops from $24.5$ to $21.1$--$22.4$) and without matching ATS on ECE. Our GRPO-trained verbalized confidence is the only method that simultaneously achieves the lowest ECE ($0.036$), the lowest OConf ($3.2$), and the highest EM ($27.4$). The gains from verbalized confidence are thus not reducible to post-hoc rescaling, self-evaluation prompting, or supervised relabeling.

\paragraph{\texttt{<uncertain>} marker.}
This comparison asks whether wrongness is surfaced early enough for intervention. Passive detectors and retrieval-controller baselines show that failure is partially detectable without training, but the trained marker achieves the best untouched-set accuracy while remaining competitive on precision and recall. This supports the main claim from Section~\ref{sec:local-interface}: training changes the generator so that more failures become explicit control signals, rather than merely attaching a detector after generation.

\subsection{Downstream Task Performance: Adaptive RAG Triggering}

\newcommand{\ragbest}[1]{\cellcolor{green!22}\textbf{#1}}
\newcommand{\ragsecond}[1]{\cellcolor{green!13}#1}
\newcommand{\ragthird}[1]{\cellcolor{green!7}#1}
\newcommand{\ragbound}[1]{\cellcolor{gray!18}\textit{#1}}
\newcommand{\trigbest}[1]{\cellcolor{yellow!22}\textbf{#1}}
\newcommand{\ragglobal}[1]{\cellcolor{blue!8}#1}
\newcommand{\raglocal}[1]{\cellcolor{orange!10}#1}

We evaluate downstream retrieval control on \(500\) questions from each of the five QA datasets. Each method first answers without retrieval, optionally triggers one retrieval step, and then answers again using the retrieved evidence. Table~\ref{tab:adaptive_rag_results} compares our two trained methods against no-retrieval (\texttt{No-Ret}), always-retrieval (\texttt{Ret-All}), Self-RAG (\texttt{SR-7B}/\texttt{SR-13B})~\cite{asai2023selfrag}, ADARAGUE~\cite{moskvoretskii2025adaptive}, FLARE~\cite{jiang2023flare}, DRAGIN~\cite{su2403dragin}, and prompt-only base controls. \texttt{Base-Verbal} and \texttt{Base-UncTok} use the untrained base model with the verbalized-confidence and \texttt{<uncertain>} marker prompts, respectively; \texttt{Verbal-Calibrate} and \texttt{Uncertain-Calibrate} are our two trained methods.

\begin{table*}[t]
\centering
\caption{
Adaptive RAG evaluation results. EM, F1 (both \%), and trigger rate T (\%) are shown per dataset when available. 
Darker shaded cells indicate stronger EM/F1 performance within each metric column; yellow marks the highest selective trigger rate, excluding \textit{Ret-All}.
}
\label{tab:adaptive_rag_results}
\resizebox{\linewidth}{!}{%
\begin{tabular}{l ccc ccc ccc ccc ccc ccc}
\toprule
& \multicolumn{3}{c}{HotpotQA}
& \multicolumn{3}{c}{MuSiQue}
& \multicolumn{3}{c}{2WikiMultiHop}
& \multicolumn{3}{c}{NQ}
& \multicolumn{3}{c}{TriviaQA}
& \multicolumn{3}{c}{Overall} \\
\cmidrule(lr){2-4}\cmidrule(lr){5-7}\cmidrule(lr){8-10}
\cmidrule(lr){11-13}\cmidrule(lr){14-16}\cmidrule(lr){17-19}
Method & EM & F1 & T & EM & F1 & T & EM & F1 & T & EM & F1 & T & EM & F1 & T & EM & F1 & T \\
\midrule
No-Ret
& 23.4 & 31.9 & --
& 5.6 & 10.3 & --
& 16.4 & 20.1 & --
& 29.2 & \ragthird{42.0} & --
& \ragthird{54.8} & \ragthird{61.2} & --
& 25.9 & 33.1 & -- \\

SR-7B
& 4.2 & 16.6 & 57.8
& 0.6 & 5.2 & 56.0
& 4.4 & 14.8 & 45.6
& 17.8 & 22.0 & 18.0
& 5.6 & 24.9 & 53.6
& 6.5 & 16.7 & 46.2 \\

SR-13B
& 2.6 & 16.0 & 42.4
& 0.8 & 6.1 & 39.8
& 3.4 & 16.6 & 30.0
& \ragthird{30.8} & 36.9 & 5.4
& 6.4 & 36.5 & 29.8
& 8.8 & 22.4 & 29.5 \\

\midrule
\ragbound{Ret-All}
& \ragbound{34.0} & \ragbound{44.9} & \ragbound{100}
& \ragbound{9.8} & \ragbound{17.4} & \ragbound{100}
& \ragbound{30.0} & \ragbound{34.7} & \ragbound{100}
& \ragbound{25.4} & \ragbound{35.8} & \ragbound{100}
& \ragbound{43.0} & \ragbound{49.1} & \ragbound{100}
& \ragbound{28.4} & \ragbound{36.4} & \ragbound{100} \\

ADARAGUE
& 27.8 & 37.2 & 57.0
& 9.6 & 15.9 & 53.8
& 20.6 & 25.8 & 57.4
& 28.6 & 40.0 & 21.8
& 52.2 & 58.1 & 29.0
& 27.8 & 35.4 & 43.8 \\

FLARE
& 21.8 & 35.2 & \trigbest{99.2}
& 6.4 & 13.8 & \trigbest{99.8}
& 15.2 & 22.6 & \trigbest{99.4}
& 19.8 & 31.5 & \trigbest{99.0}
& 41.0 & 49.9 & \trigbest{97.4}
& 20.8 & 30.6 & \trigbest{99.0} \\

DRAGIN
& \ragthird{34.6} & \ragthird{48.4} & 87.6
& \ragthird{17.0} & \ragsecond{27.1} & 87.0
& \ragthird{34.4} & \ragbest{43.9} & 85.2
& 23.8 & 35.9 & 70.4
& \ragsecond{53.6} & \ragsecond{60.6} & 54.0
& \ragthird{32.7} & \ragthird{43.2} & 76.8 \\

Base-Verbal
& 21.2 & 28.8 & 15.2
& 10.8 & 17.5 & 13.6
& 17.0 & 19.7 & 9.2
& 17.8 & 29.7 & 22.8
& 42.4 & 43.8 & 4.6
& 21.8 & 27.9 & 13.1 \\

Base-UncTok
& 22.5 & 33.1 & 3.5
& 8.8 & 16.2 & 4.3
& 17.7 & 21.8 & 2.6
& 18.8 & 28.1 & 3.7
& 34.8 & 36.8 & 4.5
& 20.5 & 27.2 & 3.5 \\

\ragglobal{Verbal-Calibrate}
& \ragsecond{42.0} & \ragbest{52.8} & 61.6
& \ragbest{21.8} & \ragbest{28.8} & 76.8
& \ragbest{38.4} & \ragsecond{42.9} & 48.2
& \ragbest{52.4} & \ragbest{54.4} & 25.0
& \ragsecond{63.2} & \ragsecond{72.5} & 28.8
& \ragbest{41.6} & \ragbest{50.5} & 48.1 \\

\raglocal{Uncertain-Calibrate}
& \ragbest{42.6} & \ragsecond{52.7} & 67.4
& \ragsecond{17.6} & \ragthird{24.1} & 94.2
& \ragsecond{36.2} & \ragthird{39.6} & 59.2
& \ragsecond{41.4} & \ragsecond{51.0} & 52.0
& \ragbest{66.6} & \ragbest{73.2} & 34.0
& \ragsecond{40.9} & \ragsecond{48.1} & 61.4 \\
\bottomrule
\end{tabular}%
}
\end{table*}

\paragraph{Main result.} In Table~\ref{tab:adaptive_rag_results}, darker shaded cells indicate stronger performance within each metric column, blue shades are used for the verbalized-confidence evaluation, and orange shades are used for the \texttt{<uncertain>} marker evaluation. Yellow shading marks the highest trigger/emission rate, which reflects intervention frequency rather than necessarily better performance.
Both trained methods outperform non-adaptive and retrieval-heavy baselines. \texttt{Verbal-Calibrate} achieves the best overall result (\(41.6\%\) EM, \(50.5\%\) F1) with a \(48.1\%\) trigger rate, while \texttt{Uncertain-Calibrate} reaches \(40.9\%\) EM and \(48.1\%\) F1 with a higher \(61.4\%\) trigger rate. The gains over FLARE and DRAGIN are especially informative because those methods retrieve far more often; the improvement therefore comes from better control signals, not simply more retrieval. The weak \texttt{Base-Verbal} and \texttt{Base-UncTok} results show the same point from the other side: exposing a marker or confidence score is insufficient unless the model is trained to use it.

\paragraph{Roles of the two methods.}
The two methods behave as intended. Verbalized confidence is stronger and more retrieval-efficient overall, making it a better question-level gate. The \texttt{<uncertain>} marker retrieves more aggressively and performs best on some datasets, consistent with a high-recall intervention signal during reasoning. This supports the paper's central framing: end-of-reasoning self-assessment is useful for deciding whether to trust a completed answer, while during-reasoning self-assessment is useful for deciding when to intervene before the model commits.

\section{Conclusion}
We studied LLM self-assessment as a problem of exposure: rather than estimating reliability after generation, we train the model to express it within its own response, in a form a downstream controller can act on. Within a unified post-training framework, we studied two design choices that differ in when the signal is exposed: verbalizing a confidence score after the final answer, and emitting an  \texttt{<uncertain>} marker during reasoning. The two design choices produce different but complementary benefits: verbalized confidence is most effective for final-answer trust and retrieval gating, while the \texttt{<uncertain>} marker is most effective for exposing silent failures early enough for intervention.

Our results also show that these gains are not merely formatting effects. Verbalized-confidence training sharpens a weak confidence-related structure already present in the pretrained model, whereas \texttt{<uncertain>} training induces a broader late-layer state that supports explicit mid-reasoning signaling. Together, these findings suggest that effective self-assessment in LLMs should be trained as task-matched communication: an end-of-reasoning confidence summary when the decision is whether to trust the final answer, and a during-reasoning marker when the decision is whether the model needs intervention before it fully commits.

\paragraph{Limitations.}
We study factual
QA and adaptive retrieval with a single during-reasoning marker; coding and agentic multi-turn settings may require richer
feedback and multiple specialized markers for different failure modes or tool
calls. 

\bibliographystyle{plain}
\bibliography{refs}

\clearpage 

\appendix

\begin{center}
    {\Large \textbf{Appendix}}
\end{center}




\startcontents[appendix]


\printcontents[appendix]{}{1}{\setcounter{tocdepth}{3}}

\clearpage

\section{Related Work}
\label{app:related_work}

\paragraph{Uncertainty estimation and calibration in language models.}
A large body of work studies how to estimate whether a model's generated answer is reliable.
Classical calibration methods evaluate whether predicted probabilities match empirical correctness, with temperature scaling serving as a simple and widely used post-hoc recalibration method~\citep{guo2017calibration}.
For language models, uncertainty is harder to define because outputs are free-form sequences rather than fixed-class predictions.
Recent surveys and benchmarks organize LLM uncertainty estimation into likelihood-based, sampling-based, semantic, verbalized, and hybrid methods~\citep{he2025survey,vashurin2025benchmarking}.
Semantic uncertainty methods aggregate generations by meaning rather than surface form, showing that uncertainty in natural language should often be measured over semantic equivalence classes rather than exact strings~\citep{kuhn2023semantic}.
Other work develops uncertainty quantification for black-box LLMs~\citep{lin2023generating}, evaluates rank-calibration in language models~\citep{huang2024uncertainty}, and studies subjective uncertainty and calibration in natural language generation~\citep{wang2024subjective}.
Our work is complementary to this line: rather than only post-hoc estimation, we train the model to expose its self-assessment explicitly within the response itself.

\paragraph{Verbalized confidence and model self-knowledge.}
Several papers ask whether language models know when they are likely to be correct.
Early work showed that models can be trained to express uncertainty in words and that such expressions can become more calibrated than raw model behavior~\citep{lin2022teaching}.
Related work studies whether models can judge the truth of their own answers through P(True)-style prompting and self-evaluation~\citep{kadavath2022language}.
Empirical studies of confidence elicitation show that LLMs can produce verbalized confidence scores, but these scores are often sensitive to prompting and may remain overconfident without additional training~\citep{xiong2023can,yang2024verbalized,yona2024can}.
Other recent work argues that pretrained LLMs already contain useful confidence signals that can be extracted or recalibrated without full retraining~\citep{luo2025your}, while supervised approaches teach models to better distinguish what they know from what they do not know~\citep{kapoor2024large}.
Recent studies also investigate how verbalized confidence affects generation diversity and self-verification behavior~\citep{zhang2025verbalized,jang2025verbalized,fu2025deep}.
Our verbalized-confidence training builds on this direction, but differs in two ways: we train confidence with a proper-scoring-rule-style reward rather than only eliciting it by prompting, and we analyze how calibration changes the model's hidden confidence structure.

\paragraph{Reward-based post-training for calibrated behavior.}
Post-training with reinforcement learning has become a standard way to shape LLM behavior~\citep{ouyang2022training,shao2024deepseekmath,guo2025deepseek}.
Recent work shows that reward design can strongly affect reasoning style, calibration, and hallucination behavior~\citep{li2506confidence,wu2025mitigating,leng2024taming}.
In particular, calibration-aware rewards and confidence-aware RL objectives encourage models to assign lower confidence to incorrect answers and higher confidence to correct ones.
Our trajectory-reweighting analysis gives a local mathematical account of this effect: under a small policy-improvement step, the reward tilts probability mass away from overconfident wrong trajectories and toward confident correct trajectories.
This perspective connects reward shaping to a support-preserving redistribution of existing reasoning trajectories, rather than treating calibration gains as purely post-hoc rescaling.
It also relates to recent analyses of reinforcement learning for reasoning, which study how RL changes token-level behavior and trajectory selection~\citep{wang2025beyond,zhao2505learning,zhu2025path}.

\paragraph{Selective prediction, abstention, and retrieval decisions.}
Uncertainty estimates are useful only when connected to a downstream decision, such as abstaining, asking for help, or retrieving evidence.
Adaptive retrieval systems use uncertainty or complexity estimates to decide when retrieval is worthwhile.
Adaptive-RAG learns to adapt retrieval behavior based on question complexity~\citep{jeong2024adaptive}; FLARE triggers retrieval during generation based on low-confidence token predictions~\citep{jiang2023flare}; DRAGIN retrieves based on real-time information needs~\citep{su2403dragin}; and ADARAGUE reintroduces uncertainty features into adaptive retrieval control~\citep{moskvoretskii2025adaptive}.
SEAKR similarly studies self-aware knowledge retrieval for adaptive RAG~\citep{yao2025seakr}.
Other work studies how much retrieval can improve reasoning~\citep{liu2024much}, improves retrievers for reasoning-intensive tasks~\citep{shao2025reasonir}, and shows that simple retrieval can help challenging reasoning benchmarks~\citep{lyu2025frustratingly}.
Probing-RAG uses model-internal probing to guide selective retrieval~\citep{baek2025probing}, which is especially close to our hidden-state probe analysis.
Our work shares the goal of reducing unnecessary retrieval while improving final answers, but differs in where the signal comes from: instead of relying only on external uncertainty features or passive detectors, we train the generator itself to emit a confidence score or a reasoning-time \texttt{<uncertain>} marker that can be used as the trigger.

\paragraph{Learned tokens, control markers, and reasoning-time intervention.}
A related line of work studies whether special tokens or learned markers can package complex behaviors into compact controllable symbols.
Gist tokens compress prompts into learned latent handles~\citep{mu2023learning}, and recent work on neologism learning studies how new tokens can acquire controllable meanings through training~\citep{hewitt2025neologism,hewitt2025we}.
In reasoning and tool-use settings, models can also learn to emit action-like markers that control retrieval, critique, or generation behavior.
Self-RAG trains models to retrieve, generate, and critique using reflection tokens~\citep{asai2023selfrag}, while backtracking tokens allow models to mark unsafe or undesirable generation paths and revise them~\citep{zhang2024backtracking}.
Confidence-token routing further shows that explicit learned tokens can support model selection or rejection decisions~\citep{chuang2024learning}.
Our \texttt{<uncertain>} marker is closest in spirit to these learned control-token methods, but its role is specifically to expose a high-risk reasoning state before the answer is finalized.
This makes it different from final-answer confidence: the marker is step-level, binary, and intervention-oriented.

\paragraph{Internal states and mechanistic evidence for self-assessment.}
Several works suggest that a model's hidden states can contain information about truthfulness, correctness, or hallucination risk that is not always faithfully expressed in the output.
Latent-knowledge work shows that internal representations can encode truth-related information without direct supervision~\citep{burns2022discovering}.
Other studies find that hidden states can reveal when a model is lying or likely to hallucinate~\citep{azaria2023internal,ji2024llm}, and semantic-entropy probes show that hallucination risk can sometimes be detected from internal activations~\citep{kossen2024semantic}.
Mechanistic and layerwise analyses of post-training further suggest that fine-tuning can affect different layers and token positions unevenly~\citep{wang2025beyond}.
Our analysis follows this motivation but focuses on comparing two trained self-assessment signals.
We find that verbalized confidence largely preserves representation geometry while sharpening a confidence-related structure, whereas the \texttt{<uncertain>} marker produces broader late-layer changes around the emission event.
This supports the central distinction of the paper: end-of-reasoning confidence summarizes reliability after the trajectory is complete, while during-reasoning marking exposes a risky step before the model commits.

\paragraph{Positioning of this work.}
Overall, prior work provides strong tools for estimating uncertainty, calibrating confidence, triggering retrieval, and learning control tokens.
However, these directions are often studied separately: uncertainty estimation is usually post-hoc, retrieval controllers often rely on external features or passive detectors, and learned control tokens are not necessarily trained to represent reasoning-time uncertainty.
Our work connects these threads by asking where self-assessment should be exposed within the response.
We study two complementary choices under a unified post-training view: verbalized confidence after the final answer, which supports trust, abstention, and question-level retrieval gating; and a \texttt{<uncertain>} marker during reasoning, which supports intervention before commitment.
The empirical and mechanistic results show that these two choices are not interchangeable output formats, but distinct ways of making the model's self-assessment available to downstream systems at different moments in generation.

\section{Proofs for the trajectory-reweighting analysis}
\label{app:traj_reweight_proofs}

In this appendix, we give short proofs for the theoretical claims in Section~3. Throughout, we use the tilted-distribution idealization
\begin{equation}
\label{eq:app-tilted-policy}
\pi_{\theta'}(z \mid x)
\;\propto\;
\pi_\theta(z \mid x)\exp\!\bigl(\eta\, r(z;x)\bigr),
\end{equation}
as a first-order analytical model of one-step uncertainty-aware policy improvement under GRPO, where $\eta>0$ is an effective step size.

\paragraph{Normalization form.}
For fixed input $x$, define the partition function
\begin{equation}
\label{eq:app-partition-function}
Z(x) \;=\; \sum_{z} \pi_\theta(z \mid x)\exp\!\bigl(\eta\, r(z;x)\bigr).
\end{equation}
Then the reweighted policy can be written as
\begin{equation}
\label{eq:app-normalized-policy}
\pi_{\theta'}(z \mid x)
=
\frac{\pi_\theta(z \mid x)\exp\!\bigl(\eta\, r(z;x)\bigr)}{Z(x)}.
\end{equation}

\begin{proposition}[One-step relative improvement under uncertainty-aware RL]
\label{prop:one-step-relative-improvement}
For any two trajectories $z_1, z_2$ for the same input $x$, the reweighted policy satisfies
\begin{equation}
\label{eq:app-log-update}
\log \frac{\pi_{\theta'}(z_1 \mid x)}{\pi_{\theta'}(z_2 \mid x)}
=
\log \frac{\pi_\theta(z_1 \mid x)}{\pi_\theta(z_2 \mid x)}
+
\eta \bigl(r(z_1;x)-r(z_2;x)\bigr).
\end{equation}
Hence, a single RL improvement step increases the relative likelihood of higher-reward trajectories and decreases that of lower-reward ones.
\end{proposition}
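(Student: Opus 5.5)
The identity follows directly from the normalized form of the tilted policy. I would start from the reweighted policy written with its partition function, $\pi_{\theta'}(z\mid x)=\pi_\theta(z\mid x)\exp(\eta r(z;x))/Z(x)$. Before doing anything else I would fix the standing assumptions that make every term well defined. These are that $Z(x)$ is finite and positive, which holds automatically for a finite trajectory space or bounded reward, since $\exp(\cdot)>0$ and $\pi_\theta(\cdot\mid x)$ is a probability distribution. I would also require that $z_1,z_2$ lie in the support of $\pi_\theta(\cdot\mid x)$, so that both logarithms are finite.

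\textbf{Step 1.} Form the ratio $\pi_{\theta'}(z_1\mid x)/\pi_{\theta'}(z_2\mid x)$. Because $z_1$ and $z_2$ share the same input $x$, they share the same partition function $Z(x)$, and it cancels. The ratio is then
\[
\frac{\pi_\theta(z_1\mid x)}{\pi_\theta(z_2\mid x)}\exp\bigl(\eta(r(z_1;x)-r(z_2;x))\bigr).
\]

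\textbf{Step 2.} Take logarithms of this ratio, which yields the displayed identity immediately.

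\textbf{Step 3.} For the interpretive ``hence'' clause, suppose $r(z_1;x)>r(z_2;x)$. Since $\eta>0$, the additive term is strictly positive, so the log-odds of $z_1$ relative to $z_2$ strictly increase. The symmetric statement gives the decrease for the lower-reward trajectory.

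As a side remark, the update multiplies support points by positive factors. The update is therefore support-preserving: $\pi_{\theta'}(z\mid x)>0$ if and only if $\pi_\theta(z\mid x)>0$.

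There is no substantive obstacle; the only care needed is the well-definedness of $Z(x)$ and of the logarithms. If the trajectory space is countably infinite, I would note that boundedness of $r$ on the paper's reward (values in $[-1,1]$ for the confidence reward, or the finite set for the marker reward) gives $e^{-\eta M}\le Z(x)\le e^{\eta M}$, where $M$ bounds $|r|$. This makes the normalization legitimate. I would also stress that the proposition holds exactly under the tilted idealization; its relation to an actual GRPO step is the modeling assumption stated before it, not something proved here.
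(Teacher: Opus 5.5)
Your proof is correct and follows the paper's argument: write the tilted policy in normalized form, cancel the shared partition function $Z(x)$ in the ratio, take logarithms, and read off the direction of change from the sign of $\eta\bigl(r(z_1;x)-r(z_2;x)\bigr)$ using $\eta>0$. Your extra conditions, that $Z(x)$ is finite and positive and that $z_1,z_2$ lie in the support of $\pi_\theta(\cdot\mid x)$ so the logarithms are finite, are a small gain in rigor that the paper leaves implicit.
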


\begin{corollary}[Selective suppression of overconfident errors]
\label{cor:app-selective-suppression}
Consider two wrong trajectories $z_1,z_2$ with confidences $p(z_1)>p(z_2)$. Under the main-text reward, $r(z_1;x) < r(z_2;x)$, so after one improvement step,
\begin{equation}
\label{eq:app-improvement-result}
\frac{\pi_{\theta'}(z_1 \mid x)}{\pi_{\theta'}(z_2 \mid x)}
<
\frac{\pi_\theta(z_1 \mid x)}{\pi_\theta(z_2 \mid x)}.
\end{equation}
That is, among incorrect trajectories, the more overconfident one is suppressed more strongly. Symmetrically, among correct trajectories, higher-confidence ones are relatively amplified.
\end{corollary}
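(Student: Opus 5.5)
The corollary follows directly from Proposition~\ref{prop:one-step-relative-improvement} once we check the sign of the reward difference, so the plan is to verify monotonicity of the reward in confidence on each correctness branch and then substitute into \eqref{eq:app-log-update}.

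\textbf{Step 1: the wrong branch.} For an incorrect trajectory the main-text reward is $r(z;x)=-p(z)^2$. Confidences lie in $[0,1]$, and $p\mapsto -p^2$ is strictly decreasing there. Hence $p(z_1)>p(z_2)\ge 0$ gives
\[
r(z_1;x)-r(z_2;x)=-(p(z_1)-p(z_2))(p(z_1)+p(z_2))<0 .
\]
The strict inequality uses $p(z_1)+p(z_2)\ge p(z_1)>0$.

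\textbf{Step 2: apply the proposition.} Both trajectories must have positive probability under $\pi_\theta$. By the normalized form \eqref{eq:app-normalized-policy}, they then also have positive probability under $\pi_{\theta'}$, since the factor $\exp(\eta r)/Z(x)$ is strictly positive and finite. So both log-ratios in \eqref{eq:app-log-update} are well defined. Substituting Step~1 with $\eta>0$ shows that the post-update log-ratio equals the pre-update one plus a strictly negative term. Exponentiating gives \eqref{eq:app-improvement-result}.

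\textbf{Step 3: the correct branch.} For correct trajectories the reward is $r=2p-p^2$. Its derivative is $2-2p$, which is positive on $[0,1)$, so the reward is strictly increasing on $[0,1]$. Directly,
\[
r(z_1)-r(z_2)=(p_1-p_2)(2-p_1-p_2)>0
\]
whenever $p_1>p_2$, because $p_1+p_2<2$ when $p_2<p_1\le 1$. The same substitution into \eqref{eq:app-log-update} then shows that the higher-confidence correct trajectory gains relative mass.

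\textbf{Main obstacle.} The only delicate points are the boundary cases: $p_2=0$ on the wrong branch and $p_1=1$ on the correct branch, both of which still give strict inequality as shown above. There is also the implicit support assumption that $\pi_\theta(z_i\mid x)>0$, and we state this explicitly. Beyond that, the argument is immediate.
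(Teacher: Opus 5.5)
Your proposal is correct and follows the paper's proof: you check that $-p^2$ is strictly decreasing and $2p-p^2$ is increasing on $[0,1]$, then substitute the sign of the reward difference into the log-odds identity of Proposition~\ref{prop:one-step-relative-improvement}. Your explicit factorizations, which give strict monotonicity of $2p-p^2$ up to $p=1$, and your remark that $\pi_\theta(z_i\mid x)>0$ is needed for the ratios to be defined, make explicit points the paper leaves implicit.
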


\begin{corollary}[Answer improvement without new knowledge]
\label{cor:app-answer-improvement}
Let
\begin{equation}
\label{eq:app-confidence-weighted-score}
S_\theta(y \mid x)
=
\sum_{z:g(z)=y} \pi_\theta(z \mid x)\, p(z),
\end{equation}
denote the confidence-weighted score of answer $y$, and define the margin
\begin{equation}
\label{eq:app-answer-margin}
\Gamma_\theta(x)
=
S_\theta(y^\star \mid x)
-
\max_{y \neq y^\star} S_\theta(y \mid x),
\end{equation}
where $y^\star$ is the correct answer. If the update increases the relative mass of correct high-confidence trajectories enough that $\Gamma_{\theta'}(x) > 0$ while $\Gamma_\theta(x) \le 0$, then the model's prediction flips from incorrect to correct without introducing any new reasoning trajectory.
\end{corollary}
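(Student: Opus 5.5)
The statement immediately above is Corollary~\ref{cor:app-answer-improvement}, and I would prove it by unpacking the definitions. Under the tilted-policy idealization~\eqref{eq:app-normalized-policy}, the prediction rule is taken to be $\hat y_\theta(x)=\arg\max_y S_\theta(y\mid x)$. The proof then has two parts: a prediction flip, and the fact that no new trajectory is introduced.

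\emph{Step 1 (prediction flip).} The hypothesis $\Gamma_\theta(x)\le 0$ gives $S_\theta(y^\star\mid x)\le \max_{y\neq y^\star}S_\theta(y\mid x)$. So under $\pi_\theta$ the correct answer is not the strict maximizer of the confidence-weighted score. I would treat ties as a failure to predict $y^\star$ (or fix an adversarial tie-break), so the pre-update prediction is incorrect. The hypothesis $\Gamma_{\theta'}(x)>0$ gives $S_{\theta'}(y^\star\mid x)>S_{\theta'}(y\mid x)$ for every $y\neq y^\star$. Hence $\arg\max_y S_{\theta'}(y\mid x)=\{y^\star\}$, and the post-update prediction is correct.

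\emph{Step 2 (support preservation).} I would show that $\operatorname{supp}\pi_{\theta'}(\cdot\mid x)=\operatorname{supp}\pi_\theta(\cdot\mid x)$. By~\eqref{eq:app-normalized-policy}, $\pi_{\theta'}(z\mid x)$ equals $\pi_\theta(z\mid x)$ times the factor $\exp(\eta r(z;x))/Z(x)$. This factor is strictly positive and finite, because $r$ is bounded: it lies in $[-1,1]$ for $p\in[0,1]$. Consequently $Z(x)\in[e^{-\eta},e^{\eta}]$. So $\pi_{\theta'}(z\mid x)>0$ if and only if $\pi_\theta(z\mid x)>0$. Every trajectory contributing to $S_{\theta'}$ already had positive mass under $\pi_\theta$. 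The flip is therefore achieved purely by reweighting existing trajectories, as Proposition~\ref{prop:one-step-relative-improvement} and Corollary~\ref{cor:app-selective-suppression} describe, and no new reasoning trajectory is introduced.

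\emph{Main obstacle.} The real content lies in interpreting ``predicted answer'' and ``without new knowledge,'' not in any calculation. I expect the delicate points to be two. First, I must fix the decision rule as the argmax of $S$ and handle ties. Second, I must note that the confidence $p(z)$ and the answer map $g(z)$ are functions of the trajectory itself, so they are unchanged by the update. Only the weights $\pi$ change.

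For completeness, I would also remark why the hypothesis is natural. Write $S_{\theta'}(y\mid x)=Z(x)^{-1}\sum_{z:g(z)=y}\pi_\theta(z\mid x)p(z)e^{\eta r(z;x)}$. Correct trajectories with high $p$ receive the largest factor, $e^{\eta(2p-p^2)}$. Wrong high-$p$ trajectories receive the smallest, $e^{-\eta p^2}$. So the margin moves in the direction of $y^\star$. Actually attaining $\Gamma_{\theta'}>0$ is an assumption of the corollary rather than something to be proved.
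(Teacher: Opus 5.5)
Your proposal is correct and follows the paper's proof. You unpack $\Gamma_\theta(x)\le 0$ and $\Gamma_{\theta'}(x)>0$ to show that the $\arg\max_y S(y\mid x)$ decision rule switches to $y^\star$, then use support preservation of the tilted policy (the paper proves this in Theorem~\ref{thm:latent_answer_extraction_appendix} by the same positivity argument) to conclude that no new trajectory is introduced. Your explicit tie-breaking convention and the bound $Z(x)\in[e^{-\eta},e^{\eta}]$ simply make precise what the paper leaves implicit.
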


\begin{proof}[Proof of Proposition~\ref{prop:one-step-relative-improvement}]
For any two trajectories $z_1,z_2$ for the same input $x$, using the normalized form above,
\begin{equation}
\label{eq:app-proof-prop1-ratio}
\frac{\pi_{\theta'}(z_1 \mid x)}{\pi_{\theta'}(z_2 \mid x)}
=
\frac{
\pi_\theta(z_1 \mid x)\exp\!\bigl(\eta\, r(z_1;x)\bigr) / Z(x)
}{
\pi_\theta(z_2 \mid x)\exp\!\bigl(\eta\, r(z_2;x)\bigr) / Z(x)
}.
\end{equation}
The normalization constant cancels, giving
\begin{equation}
\label{eq:app-proof-prop1-cancel}
\frac{\pi_{\theta'}(z_1 \mid x)}{\pi_{\theta'}(z_2 \mid x)}
=
\frac{\pi_\theta(z_1 \mid x)}{\pi_\theta(z_2 \mid x)}
\exp\!\Bigl(\eta\bigl(r(z_1;x)-r(z_2;x)\bigr)\Bigr).
\end{equation}
Taking logarithms yields
\begin{equation}
\label{eq:app-proof-prop1-logodds}
\log \frac{\pi_{\theta'}(z_1 \mid x)}{\pi_{\theta'}(z_2 \mid x)}
=
\log \frac{\pi_\theta(z_1 \mid x)}{\pi_\theta(z_2 \mid x)}
+
\eta \bigl(r(z_1;x)-r(z_2;x)\bigr).
\end{equation}
Therefore, whenever $r(z_1;x)>r(z_2;x)$, the post-update log-odds of $z_1$ against $z_2$ increase; when $r(z_1;x)<r(z_2;x)$, they decrease.
\end{proof}

\begin{proof}[Proof of Corollary~\ref{cor:app-selective-suppression}]
Consider two wrong trajectories $z_1,z_2$ with confidences $p(z_1)>p(z_2)$. For wrong trajectories, the main-text reward is
\begin{equation}
\label{eq:app-proof-cor1-wrong-reward}
r(z;x) = -p(z)^2,
\end{equation}
which is strictly decreasing in $p(z)$ on $[0,1]$. Thus, if $p(z_1) > p(z_2)$, then
\begin{equation}
\label{eq:app-proof-cor1-reward-order}
r(z_1;x) = -p(z_1)^2 < -p(z_2)^2 = r(z_2;x),
\end{equation}
so the higher-confidence wrong trajectory is downweighted more strongly. Applying Proposition~\ref{prop:one-step-relative-improvement},
\begin{equation}
\label{eq:app-proof-cor1-logodds}
\log \frac{\pi_{\theta'}(z_1 \mid x)}{\pi_{\theta'}(z_2 \mid x)}
=
\log \frac{\pi_\theta(z_1 \mid x)}{\pi_\theta(z_2 \mid x)}
+
\eta\bigl(r(z_1;x)-r(z_2;x)\bigr),
\end{equation}
and the final term is strictly negative. Therefore,
\begin{equation}
\label{eq:app-proof-cor1-logodds-drop}
\log \frac{\pi_{\theta'}(z_1 \mid x)}{\pi_{\theta'}(z_2 \mid x)}
<
\log \frac{\pi_\theta(z_1 \mid x)}{\pi_\theta(z_2 \mid x)},
\end{equation}
which implies
\begin{equation}
\label{eq:app-proof-cor1-ratio-drop}
\frac{\pi_{\theta'}(z_1 \mid x)}{\pi_{\theta'}(z_2 \mid x)}
<
\frac{\pi_\theta(z_1 \mid x)}{\pi_\theta(z_2 \mid x)}.
\end{equation}
Thus, among incorrect trajectories, the more overconfident one is suppressed more strongly.

For correct trajectories, the reward is
\begin{equation}
\label{eq:app-proof-cor1-correct-order}
r(z;x) = 2p(z) - p(z)^2,
\end{equation}
which is increasing in $p(z)$ on $[0,1]$. Hence if $z_1,z_2$ are both correct with $p(z_1) > p(z_2)$, then $r(z_1;x) > r(z_2;x)$, and Proposition~\ref{prop:one-step-relative-improvement} implies that the relative likelihood of $z_1$ increases after the update. Higher-confidence correct trajectories are therefore relatively amplified.
\end{proof}

\begin{proof}[Proof of Corollary~\ref{cor:app-answer-improvement}]
Recall the confidence-weighted answer score
\begin{equation}
\label{eq:app-proof-cor2-score}
S_\theta(y \mid x)
=
\sum_{z:g(z)=y} \pi_\theta(z \mid x)\, p(z),
\end{equation}
and the answer margin
\begin{equation}
\label{eq:app-proof-cor2-margin}
\Gamma_\theta(x)
=
S_\theta(y^\star \mid x)
-
\max_{y \neq y^\star} S_\theta(y \mid x),
\end{equation}
where $y^\star$ is the correct answer.

Suppose that before the GRPO update,
\begin{equation}
\label{eq:app-proof-cor2-margin-before}
\Gamma_\theta(x)\le 0,
\end{equation}
so the correct answer does not strictly dominate all competing answers under the confidence-weighted score. Suppose further that after the update,
\begin{equation}
\label{eq:app-proof-cor2-margin-after}
\Gamma_{\theta'}(x)>0.
\end{equation}
Then by definition,
\begin{equation}
\label{eq:app-proof-cor2-score-domination}
\begin{aligned}
S_{\theta'}(y^\star \mid x)
>
S_{\theta'}(y \mid x)
\qquad
\text{for all } y \neq y^\star.
\end{aligned}
\end{equation}
Therefore the confidence-weighted decision rule
\begin{equation}
\label{eq:app-proof-cor2-decision-rule}
\hat y_\theta(x)=\arg\max_y S_\theta(y \mid x)
\end{equation}
changes from not selecting $y^\star$ before the update to selecting $y^\star$ after the update.

Finally, under the tilted-distribution model, the update only changes the relative weights of existing trajectories through $\pi_{\theta'}(z\mid x)$; it does not introduce new trajectory support. Hence the prediction flips from incorrect to correct without requiring any new reasoning trajectory to be created.
\end{proof}

\paragraph{Remark.}
These proofs establish only the consequences of the one-step tilted-distribution model. They should therefore be interpreted as a local analytical account of how uncertainty-aware RL can improve calibration and answer selection by redistributing probability mass over existing trajectories, rather than as an exact global characterization of GRPO training dynamics. 

\subsection{Proof of the support-preserving answer reweighting theorem}

We restate the theorem for convenience.

\begin{theorem}[Latent-answer extraction under support-preserving reweighting]
\label{thm:latent_answer_extraction_appendix}
Fix an input $x$, and let $\pi_\theta(z\mid x)$ be the model distribution over complete reasoning trajectories $z$. Suppose the post-update policy is given by
\begin{equation}
\pi_{\theta'}(z\mid x)
=
\frac{\pi_\theta(z\mid x)\exp\!\bigl(\eta r(z;x)\bigr)}
{Z(x)},
\label{eq:tilted_policy_appendix}
\end{equation}
where
\begin{equation}
Z(x)
=
\sum_{z}\pi_\theta(z\mid x)\exp\!\bigl(\eta r(z;x)\bigr),
\label{eq:partition_appendix}
\end{equation}
and $\eta>0$. Define the answer-level probability mass
\begin{equation}
M_\theta(y\mid x)
=
\sum_{z:\,g(z)=y}\pi_\theta(z\mid x),
\label{eq:answer_mass_appendix}
\end{equation}
where $g(z)$ denotes the final answer induced by trajectory $z$.
Let $y^\star$ be the correct answer and let $\bar y\neq y^\star$ be any competing wrong answer. Assume that every trajectory producing the correct answer satisfies
\begin{equation}
r(z;x)\ge a,
\qquad \forall z \text{ such that } g(z)=y^\star,
\label{eq:correct_reward_lb_appendix}
\end{equation}
and every trajectory producing $\bar y$ satisfies
\begin{equation}
r(z;x)\le b,
\qquad \forall z \text{ such that } g(z)=\bar y,
\label{eq:wrong_reward_ub_appendix}
\end{equation}
for constants $a>b$. Then:
\begin{equation}
\frac{M_{\theta'}(y^\star\mid x)}{M_{\theta'}(\bar y\mid x)}
\ge
\exp\!\bigl(\eta(a-b)\bigr)
\frac{M_\theta(y^\star\mid x)}{M_\theta(\bar y\mid x)}.
\label{eq:mass_ratio_bound_appendix}
\end{equation}
Moreover,
\begin{equation}
\operatorname{supp}\pi_{\theta'}(\cdot\mid x)
=
\operatorname{supp}\pi_\theta(\cdot\mid x).
\label{eq:support_preserve_appendix}
\end{equation}
\end{theorem}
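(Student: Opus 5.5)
The plan is to work directly from the normalized tilted form \eqref{eq:tilted_policy_appendix}, in the same way as the proof of Proposition~\ref{prop:one-step-relative-improvement}. The partition function $Z(x)$ is common to every trajectory for the fixed input $x$, so it cancels in any ratio of answer masses. By \eqref{eq:answer_mass_appendix} I will write
\begin{equation*}
M_{\theta'}(y\mid x)=\frac{1}{Z(x)}\sum_{z:\,g(z)=y}\pi_\theta(z\mid x)\exp\bigl(\eta r(z;x)\bigr),
\end{equation*}
which reduces the mass-ratio claim to bounding the two unnormalized sums separately.

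For the numerator (answer $y^\star$), I use the lower bound \eqref{eq:correct_reward_lb_appendix}. Since $\eta>0$, the map $t\mapsto e^{\eta t}$ is increasing, so each summand is at least $\pi_\theta(z\mid x)e^{\eta a}$. Summing gives
\begin{equation*}
Z(x)\,M_{\theta'}(y^\star\mid x)\ge e^{\eta a}M_\theta(y^\star\mid x).
\end{equation*}
For the denominator (answer $\bar y$), the upper bound \eqref{eq:wrong_reward_ub_appendix} gives in the same way
\begin{equation*}
Z(x)\,M_{\theta'}(\bar y\mid x)\le e^{\eta b}M_\theta(\bar y\mid x).
\end{equation*}
Dividing the first inequality by the second, with $Z(x)$ cancelling, yields \eqref{eq:mass_ratio_bound_appendix}. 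The factor $e^{\eta(a-b)}$ exceeds $1$ because $a>b$, but the inequality itself does not depend on that.

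The only real obstacle is well-definedness, and I will state conventions to handle it. I will assume $M_\theta(\bar y\mid x)>0$ so the right-hand ratio is finite. Then $M_{\theta'}(\bar y\mid x)>0$ as well, by the support argument below, so division is legitimate. If $M_\theta(y^\star\mid x)=0$, the bound holds trivially as $0\ge 0$. I also need $0<Z(x)<\infty$. This follows because $Z(x)$ is a $\pi_\theta$-average of strictly positive weights. Finiteness is immediate in the finite-vocabulary, bounded-length setting, or whenever $r$ is bounded above; the paper's rewards lie in $[-1,1]$ and $[-2,5]$. For countably many trajectories I would add the assumption that $r$ is bounded above.

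Support preservation follows from the same formula. For any $z$, the factor $\exp(\eta r(z;x))/Z(x)$ is strictly positive and finite. Hence $\pi_{\theta'}(z\mid x)>0$ if and only if $\pi_\theta(z\mid x)>0$, which gives \eqref{eq:support_preserve_appendix}. I will close with a remark paralleling Corollary~\ref{cor:app-answer-improvement}: the update raises the relative mass of $y^\star$ against every competitor $\bar y$ satisfying the reward gap, and it does so without creating any new trajectories.
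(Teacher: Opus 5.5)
Your proposal is correct and follows the paper's proof essentially step for step. Like the paper, you bound the $y^\star$ and $\bar y$ tilted sums separately by $e^{\eta a}M_\theta(y^\star\mid x)/Z(x)$ and $e^{\eta b}M_\theta(\bar y\mid x)/Z(x)$, let $Z(x)$ cancel in the ratio, and get support preservation from the strict positivity of $\exp(\eta r(z;x))/Z(x)$; the well-definedness conditions you add ($M_\theta(\bar y\mid x)>0$ and $0<Z(x)<\infty$) are left implicit in the paper and are a harmless tightening.
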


\begin{proof}
We first prove the answer-mass ratio bound. By definition,
\begin{align}
M_{\theta'}(y^\star\mid x)
&=
\sum_{z:\,g(z)=y^\star}\pi_{\theta'}(z\mid x) \\
&=
\sum_{z:\,g(z)=y^\star}
\frac{\pi_\theta(z\mid x)\exp\!\bigl(\eta r(z;x)\bigr)}{Z(x)}.
\label{eq:correct_mass_start}
\end{align}
For every trajectory $z$ such that $g(z)=y^\star$, the assumption
in Eq.~\eqref{eq:correct_reward_lb_appendix} implies
\begin{equation}
\exp\!\bigl(\eta r(z;x)\bigr)\ge \exp(\eta a).
\label{eq:correct_exp_lb}
\end{equation}
Substituting this into Eq.~\eqref{eq:correct_mass_start} gives
\begin{align}
M_{\theta'}(y^\star\mid x)
&\ge
\sum_{z:\,g(z)=y^\star}
\frac{\pi_\theta(z\mid x)\exp(\eta a)}{Z(x)} \\
&=
\frac{\exp(\eta a)}{Z(x)}
\sum_{z:\,g(z)=y^\star}\pi_\theta(z\mid x) \\
&=
\frac{\exp(\eta a)}{Z(x)}\,M_\theta(y^\star\mid x).
\label{eq:correct_mass_lower}
\end{align}

Similarly,
\begin{align}
M_{\theta'}(\bar y\mid x)
&=
\sum_{z:\,g(z)=\bar y}\pi_{\theta'}(z\mid x) \\
&=
\sum_{z:\,g(z)=\bar y}
\frac{\pi_\theta(z\mid x)\exp\!\bigl(\eta r(z;x)\bigr)}{Z(x)}.
\label{eq:wrong_mass_start}
\end{align}
For every trajectory $z$ such that $g(z)=\bar y$, the assumption
in Eq.~\eqref{eq:wrong_reward_ub_appendix} implies
\begin{equation}
\exp\!\bigl(\eta r(z;x)\bigr)\le \exp(\eta b).
\label{eq:wrong_exp_ub}
\end{equation}
Therefore,
\begin{align}
M_{\theta'}(\bar y\mid x)
&\le
\sum_{z:\,g(z)=\bar y}
\frac{\pi_\theta(z\mid x)\exp(\eta b)}{Z(x)} \\
&=
\frac{\exp(\eta b)}{Z(x)}
\sum_{z:\,g(z)=\bar y}\pi_\theta(z\mid x) \\
&=
\frac{\exp(\eta b)}{Z(x)}\,M_\theta(\bar y\mid x).
\label{eq:wrong_mass_upper}
\end{align}

Combining Eq.~\eqref{eq:correct_mass_lower} and Eq.~\eqref{eq:wrong_mass_upper}, we obtain
\begin{align}
\frac{M_{\theta'}(y^\star\mid x)}{M_{\theta'}(\bar y\mid x)}
&\ge
\frac{\frac{\exp(\eta a)}{Z(x)}M_\theta(y^\star\mid x)}
{\frac{\exp(\eta b)}{Z(x)}M_\theta(\bar y\mid x)} \\
&=
\exp\!\bigl(\eta(a-b)\bigr)
\frac{M_\theta(y^\star\mid x)}{M_\theta(\bar y\mid x)}.
\end{align}
This proves Eq.~\eqref{eq:mass_ratio_bound_appendix}.

We now prove support preservation. Since $\exp(\eta r(z;x))>0$ for every trajectory $z$ and
$Z(x)>0$, Eq.~\eqref{eq:tilted_policy_appendix} implies
\begin{equation}
\pi_{\theta'}(z\mid x)>0
\quad \Longleftrightarrow \quad
\pi_\theta(z\mid x)>0.
\label{eq:support_equiv_step}
\end{equation}
Therefore,
\begin{equation}
\operatorname{supp}\pi_{\theta'}(\cdot\mid x)
=
\operatorname{supp}\pi_\theta(\cdot\mid x),
\end{equation}
which proves Eq.~\eqref{eq:support_preserve_appendix}.

Finally, Eq.~\eqref{eq:support_preserve_appendix} yields the main interpretation used in the paper:
the update cannot create a correct trajectory that was absent from the original support. It can only
increase the relative mass of correct trajectories that were already present but underweighted under
$\pi_\theta$.
\end{proof}

\paragraph{Specialization to the verbal-confidence reward.}
Under the Brier-style reward
\begin{equation}
r(z;x)=
\begin{cases}
2p(z) - p(z)^2, & g(z)=y^\star,\\
-p(z)^2, & g(z)\neq y^\star,
\end{cases}
\label{eq:verbal_reward_appendix}
\end{equation}
suppose every correct-answer trajectory satisfies $p(z) \ge \alpha$ and every trajectory producing
$\bar y$ satisfies $p(z) \le \beta$. Since $2p - p^2$ is increasing in $p$ on $[0,1]$ and $-p^2$ is strictly decreasing in $p$ on $[0,1]$, we may choose
\begin{equation}
a = 2\alpha - \alpha^2,
\qquad
b = 0.
\label{eq:ab_specialization_appendix}
\end{equation}
Substituting into Eq.~\eqref{eq:mass_ratio_bound_appendix} yields
\begin{equation}
\frac{M_{\theta'}(y^\star\mid x)}{M_{\theta'}(\bar y\mid x)}
\ge
\exp\!\bigl(\eta(2\alpha - \alpha^2)\bigr)
\frac{M_\theta(y^\star\mid x)}{M_\theta(\bar y\mid x)}.
\label{eq:verbal_specialized_bound_appendix}
\end{equation}
Note that the bound depends on $\alpha$ but not on $\beta$. The assumption $p(z) \le \beta$ only restricts how high wrong-trajectory confidence can go, not how low it can go; since $r = -p^2$ is largest when $p$ is small, the upper bound $b$ over wrong trajectories is $0$ regardless of $\beta$. Thus, when correct trajectories are highly confident (large $\alpha$), wrong trajectories are exponentially downweighted relative to correct trajectories, which formalizes the intuition that the objective acts as an anti-overconfidence filter.

\section{Additional Mechanistic Evidence}
\label{app:mechanistic}

This appendix provides additional mechanistic detail supporting the main-text claim that calibration is integrated into the reasoning process rather than appended as a purely superficial output-formatting step. The appendix has two goals. First, it expands several analyses that are informative but not central enough for the main body. Second, it clarifies the limits of what the current experiments do and do not show. In particular, these analyses support a mechanistic account of uncertainty-aware reasoning, but they do not by themselves prove that the emitted confidence is the true posterior probability that the answer is correct.

\subsection{Expanded Token-Level Divergence Analysis}
\label{app:token-divergence}

The main text emphasizes per-token localization, since this is the clearest way to show that calibration-induced change is concentrated at uncertainty-related positions. A complementary view is to examine the \emph{total} KL mass allocated to each token type. This diagnostic is useful, but it must be interpreted carefully because long reasoning spans dominate total mass simply by occupying many more positions than uncertainty tokens.

\begin{figure*}[htbp]
    \centering
    \begin{subfigure}[t]{0.48\textwidth}
        \centering
        \includegraphics[width=\linewidth]{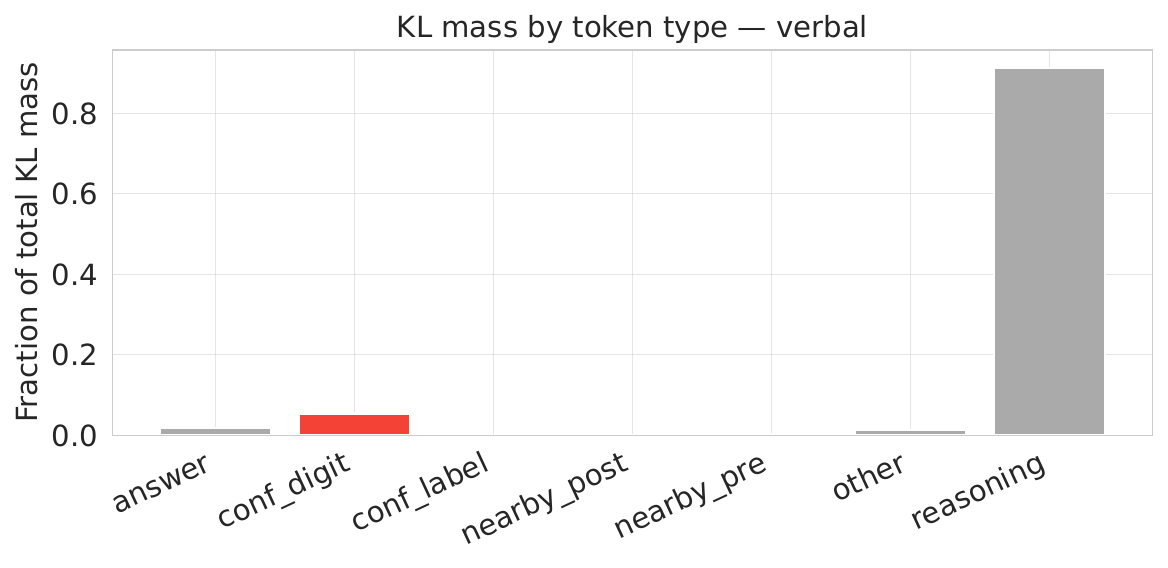}
        \caption{Verbalized confidence. Most total KL mass lies in reasoning tokens, even though confidence digits are strongly enriched on a per-token basis.}
        \label{fig:app-exp1-mass-verbal}
    \end{subfigure}
    \hfill
    \begin{subfigure}[t]{0.48\textwidth}
        \centering
        \includegraphics[width=\linewidth]{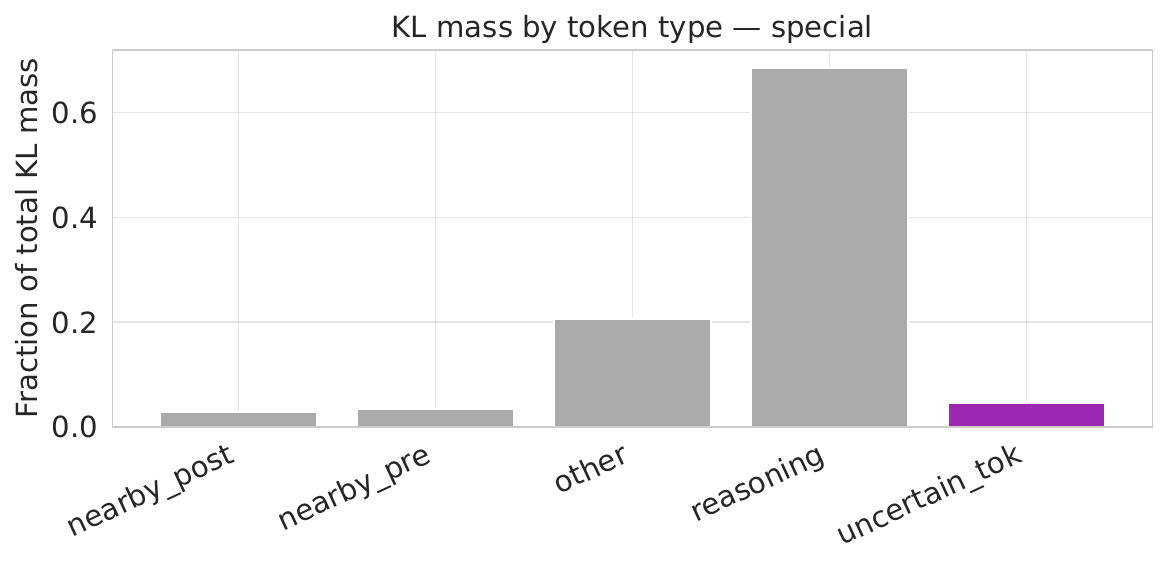}
        \caption{\texttt{<uncertain>} marker. The \texttt{<uncertain>} token is enriched, but total KL mass remains dominated by reasoning and nearby context tokens.}
        \label{fig:app-exp1-mass-special}
    \end{subfigure}
    \caption{\textbf{KL mass fractions by token type.} These plots complement the main-text boxplots by showing total KL allocation rather than per-token enrichment. Because reasoning spans are much longer than uncertainty spans, raw mass fractions should be interpreted together with the per-token statistics in the main text.}
    \label{fig:app-exp1-mass}
\end{figure*}

Figure~\ref{fig:app-exp1-mass} shows why the per-token view is the right primary lens. Under verbalized confidence, the confidence digit is strongly enriched relative to ordinary reasoning tokens, but reasoning still accounts for most KL mass because it occupies far more positions. The same logic holds under the \texttt{<uncertain>} marker: the \texttt{<uncertain>} token is a meaningful concentration point, but the surrounding reasoning sequence still carries most of the aggregate divergence. This is consistent with a mechanism in which uncertainty is computed across the reasoning trace and only becomes especially visible at a small number of output positions.

Two additional details matter for interpretation. First, under verbalized confidence, the \texttt{Confidence:} label itself is essentially inert, reinforcing the conclusion that training altered the emitted scalar value rather than the output template. Second, under the \texttt{<uncertain>} marker, the model shows elevated KL in the nearby pre- and post-windows around the emission position, which is not seen under verbalized confidence. This broader footprint suggests that under the \texttt{<uncertain>} marker, the model enters a local uncertainty-related computation regime around the emission event, whereas under verbalized confidence the model behaves more like a clean endpoint readout.

A caveat is that these token-type summaries are affected by sequence truncation. The current analyses used \texttt{max\_seq\_len=512}, and the main analysis already noted that this likely increases the residual \texttt{other} category and may undercount some structured output regions. This does not undermine the core localization result, but it does mean that the exact token-type mass fractions should be treated as approximate.

\subsection{Hidden-State Patching as Supporting Evidence}
\label{app:patching}

A natural but incorrect inference from token-level localization is that the
uncertainty token itself contains the full causal mechanism. The activation-
patching results provide supporting evidence against that stronger claim.
Localization identifies where the calibration effect becomes most visible in
the output distribution, but not necessarily where that effect is fully
computed.

\begin{figure*}[htbp]
    \centering
    \begin{subfigure}[t]{0.48\textwidth}
        \centering
        \includegraphics[width=\linewidth]{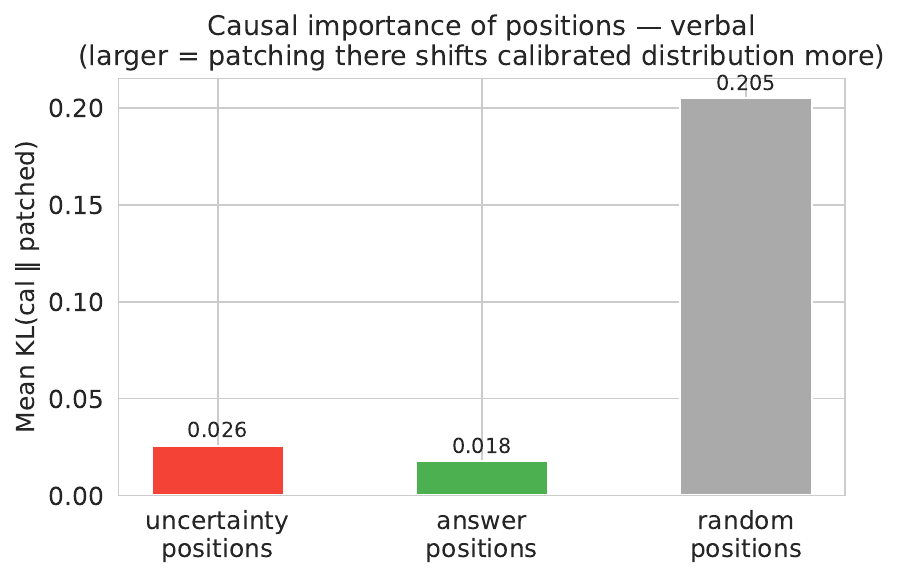}
        \caption{Verbalized confidence}
        \label{fig:app-exp2-verbal}
    \end{subfigure}
    \hfill
    \begin{subfigure}[t]{0.48\textwidth}
        \centering
        \includegraphics[width=\linewidth]{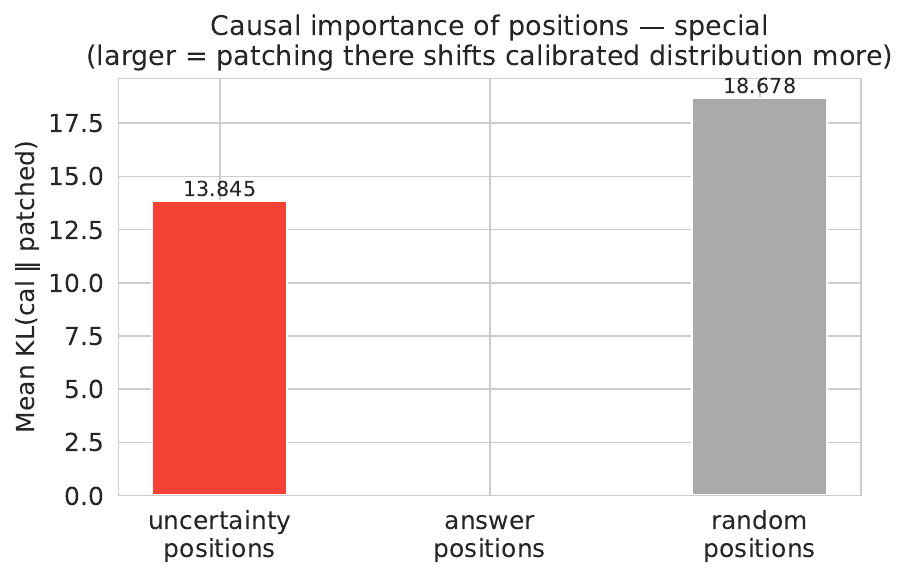}
        \caption{\texttt{<uncertain>} marker}
        \label{fig:app-exp2-special}
    \end{subfigure}
    \caption{\textbf{Hidden-state patching at the signal position versus reasoning positions.} In both methods, patching the reasoning trace is at least as disruptive as patching the signal token itself, which is consistent with uncertainty being assembled across the trajectory and only exposed at the designated output position.}
    \label{fig:app-patching}
\end{figure*}

For verbalized confidence, patching hidden states at confidence-digit positions
produces almost no disruption. In contrast, patching random reasoning
positions produces substantially larger changes on average. This pattern is
consistent with the confidence value being read out from information that has
already been assembled across earlier reasoning tokens and stored in the
accumulated attention state.

For the \texttt{<uncertain>} marker, patching the \texttt{<uncertain>} position does matter,
but it is still not the dominant causal locus under the current intervention.
Random reasoning positions are even more disruptive on average. This indicates
that the explicit uncertainty marker participates in the mechanism, but does
not by itself define the full uncertainty computation. The marker is part
of a broader process rather than a self-contained switch.

This distinction helps clarify the phrase \emph{localized but distributed}.
The calibration effect is localized in the sense that it becomes especially
visible at uncertainty-related output positions. However, the supporting
computation is distributed over the reasoning trajectory that precedes those
positions. Because the current intervention changes only one marker-position state,
we view this analysis as suggestive supporting evidence rather than a complete
causal account.

\subsection{Parameter-Space Drift and Embedding Repositioning}
\label{app:drift}

The weight-drift analysis addresses an important question left open by the representation results: if verbalized confidence preserves the base geometry so strongly, did the model simply undergo a much smaller parameter update than under the \texttt{<uncertain>} marker? The answer is no. Both models exhibit broadly similar parameter-space drift patterns, which makes their difference in representation-space behavior more striking.

\begin{figure*}[t]
    \centering
    \begin{subfigure}[t]{0.48\textwidth}
        \centering
        \includegraphics[width=\linewidth]{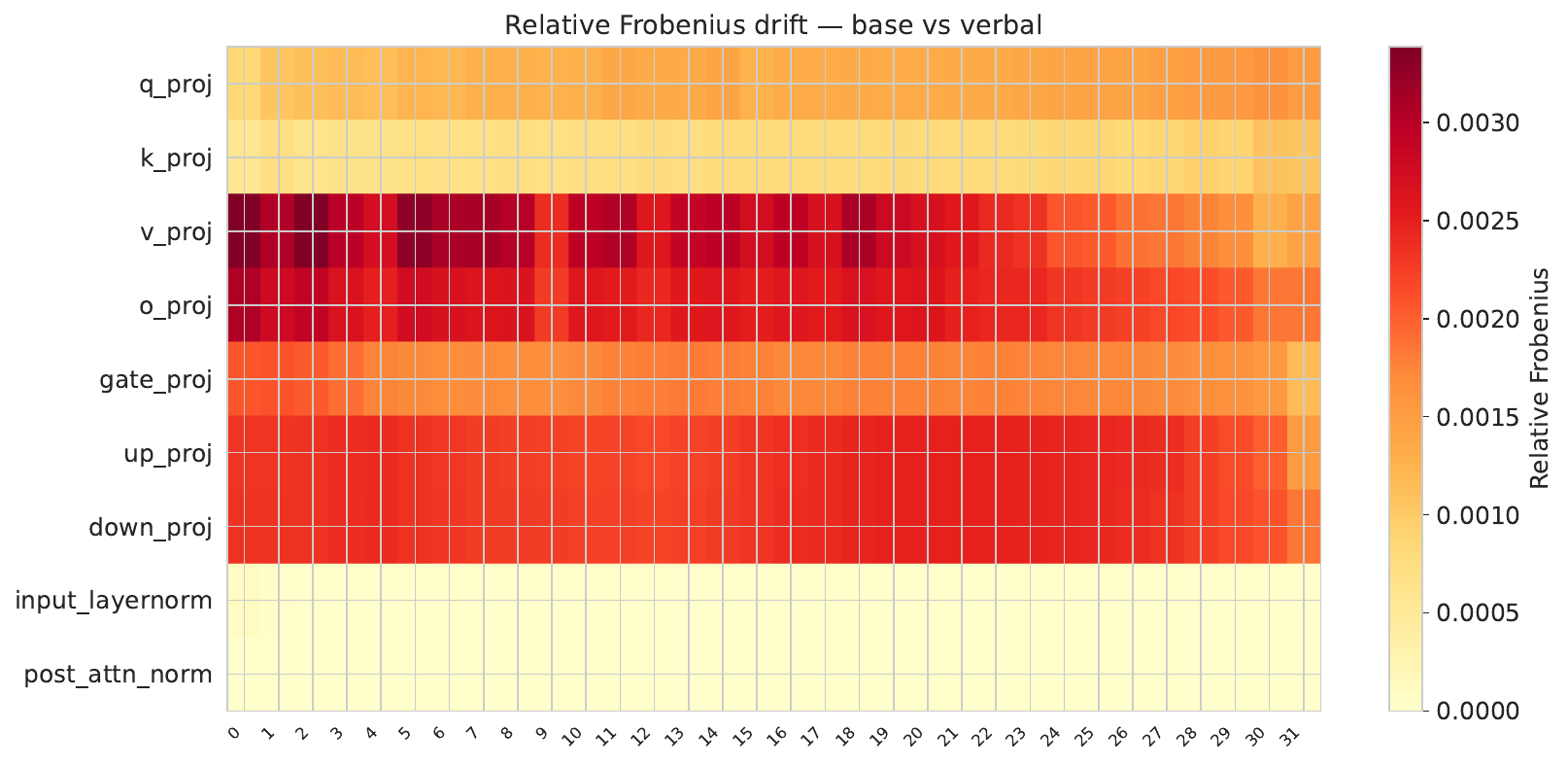}
        \caption{Verbalized confidence. Drift is concentrated in value/output projections and MLP projections, with minimal change in normalization layers.}
        \label{fig:app-exp4-verbal}
    \end{subfigure}
    \hfill
    \begin{subfigure}[t]{0.48\textwidth}
        \centering
        \includegraphics[width=\linewidth]{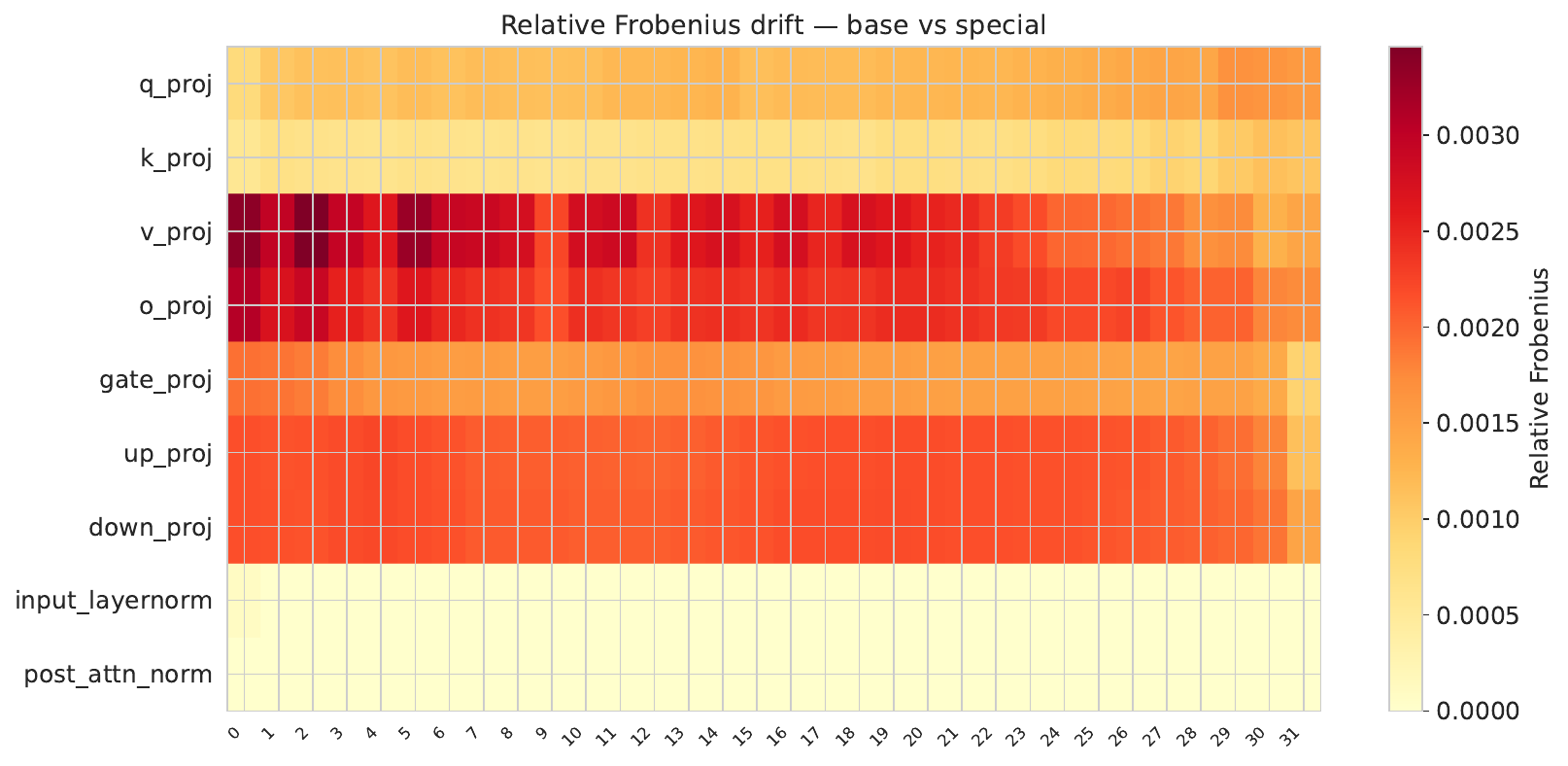}
        \caption{\texttt{<uncertain>} marker. A similar module-level drift pattern appears despite stronger late-layer representational divergence.}
        \label{fig:app-exp4-special}
    \end{subfigure}
    \caption{\textbf{Relative Frobenius weight drift across layers and module types.} Both calibrated models show similar update structure in parameter space, with the largest changes in \texttt{v\_proj}, \texttt{o\_proj}, and MLP projection layers. This makes the difference in representation geometry especially noteworthy: similar magnitudes of weight drift yield very different geometric consequences.}
    \label{fig:app-exp4-drift}
\end{figure*}

Figure~\ref{fig:app-exp4-drift} shows that both calibrated models place most of their parameter drift in the same broad module classes, especially the attention value/output projections and MLP projections. LayerNorm terms change very little. This pattern is similar across the two methods, and the overall update magnitudes are also comparable. Thus, the fact that the CKA under verbalized confidence remains essentially unchanged while the CKA under the \texttt{<uncertain>} marker diverges in late layers cannot be explained simply by one model being updated much more than the other.

This produces an informative contrast. Under verbalized confidence, similarly sized weight-space changes largely preserve local representation geometry at the uncertainty readout position. Under the \texttt{<uncertain>} marker, similarly sized changes accumulate into more visible late-layer geometric divergence. One interpretation is that the inductive bias of the training objective matters as much as raw update size: a trajectory-level scalar-confidence objective can be realized through a relatively geometry-preserving readout adjustment, whereas an explicit mid-reasoning uncertainty marker encourages a deeper reorganization of the computation that produces that marker.

The embedding-drift analysis reinforces this distinction. Under the \texttt{<uncertain>} marker, the token embeddings corresponding to the components of \texttt{<uncertain>} drift more than a random-token baseline, consistent with targeted repositioning of the explicit uncertainty marker. Under verbalized confidence, those same component tokens drift less than baseline on average, and common bracket tokens are unchanged. This further supports the interpretation that verbalized confidence does not rely on explicit uncertainty-token specialization, whereas the \texttt{<uncertain>} marker does.

\subsection{Mechanism-to-Utility Linkage and Its Limits}
\label{app:linkage}

The internal mechanism analysis in the main text establishes where uncertainty-related
changes occur and how deeply they alter the model's internal states. A
remaining question is whether those measurements are also predictive at the
level of individual examples, rather than only in aggregate.

\begin{figure*}[t]
    \centering
    \includegraphics[width=\textwidth]{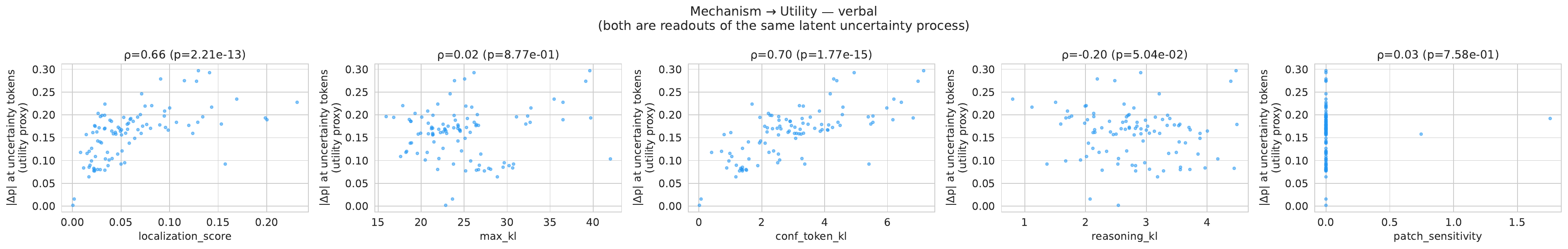}
    \caption{\textbf{Mechanism-to-behavior linkage for verbalized confidence.} Localization-related features predict per-example confidence shifts with cross-validated \(R^2 = 0.51\), indicating that the strength of the learned confidence mechanism varies meaningfully across examples rather than appearing only as a population-level average.}
    \label{fig:main-exp5-verbal}
\end{figure*}

\begin{figure*}[t]
    \centering
    \includegraphics[width=\textwidth]{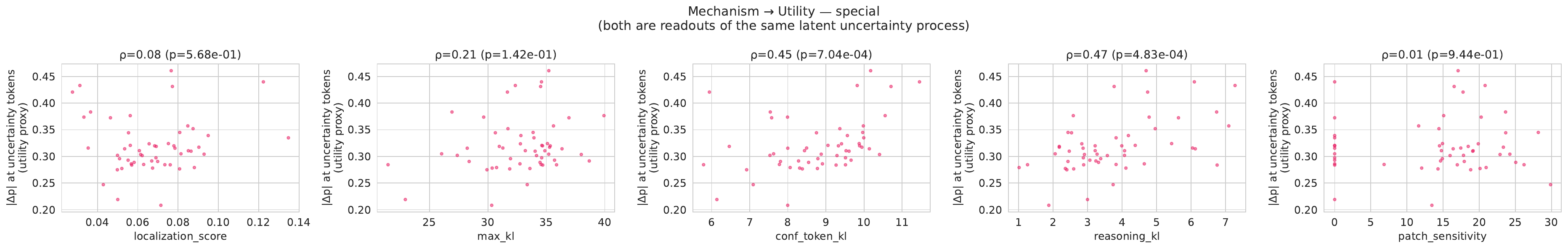}
    \caption{\textbf{Mechanism-to-utility linkage for the \texttt{<uncertain>} marker.} Under the current proxy utility target, the \texttt{<uncertain>}-marker model shows weak and unstable within-subset linkage. This likely reflects a mismatch between the current utility proxy and the model's operative mechanism, which is more naturally framed as a binary emission decision than as graded variation within already-emitting examples.}
    \label{fig:app-exp5-special}
\end{figure*}

For verbalized confidence, Figure~\ref{fig:main-exp5-verbal} shows that the
localization structure captured in the distributional analysis is not an
artifact of averaging: it varies meaningfully across examples and predicts how
strongly the confidence output differs from the base model on each individual
instance. Examples in the top localization quartile show confidence shifts
$86\%$ larger than those in the bottom quartile ($0.207$ vs.\ $0.111$). This
supports the interpretation that verbalized-confidence training is not merely learning a
surface format; the model is genuinely learning when to engage a stronger
confidence adjustment based on the information accumulated in the reasoning
trace.

However, the current utility target is still a proxy: it is the magnitude of change at the uncertainty token, not the true probability that the answer is correct. Therefore, the verbalized-confidence linkage result should be interpreted as evidence that the model learns a structured scalar uncertainty readout from the reasoning trajectory, rather than as proof that the emitted scalar is already a perfectly calibrated posterior probability of correctness.

Figure~\ref{fig:app-exp5-special} clarifies why the same analysis is weak for the \texttt{<uncertain>} marker. The \texttt{<uncertain>} marker is likely governed by a different operative mechanism. The important decision is whether to emit \texttt{<uncertain>} at all, rather than how much to vary a continuous confidence value \emph{within} the subset of examples that already emitted the marker. Under that interpretation, a within-emission regression is simply not the best target. A stronger analysis for that method would instead compare emitting and non-emitting examples directly, treating uncertainty emission as a selective-prediction or abstention-like decision.

\subsection{Summary}
\label{app:summary}

The additional analyses in this appendix reinforce three points. First, the localization observed in the main text is real, but should be understood in per-token rather than raw-mass terms. Second, localization does not imply that the uncertainty token itself is the complete causal mechanism; the supporting computation remains distributed across the reasoning trajectory. Third, verbalized confidence and the \texttt{<uncertain>} marker differ not in whether they affect uncertainty, but in how deeply they rewrite the computation that supports it. Verbalized confidence is consistent with a geometry-preserving readout of distributed uncertainty, whereas the \texttt{<uncertain>} marker is consistent with a more explicit uncertainty mode that is assembled during reasoning and expressed through a dedicated marker.

\section{Supplementary Quantitative Results}
\label{app:quantitative}

\subsection{Detailed Results for Verbalized Confidence}
\label{app:verbalized_confidence_details}

The main text keeps only the quantitative results needed to establish the central claim of verbalized confidence: calibration improves final-answer uncertainty while preserving or slightly improving answer quality. This subsection retains the more detailed breakdowns that support that conclusion but are not needed in the main narrative.

\begin{figure*}[htbp]
    \centering
        \includegraphics[width=\linewidth]{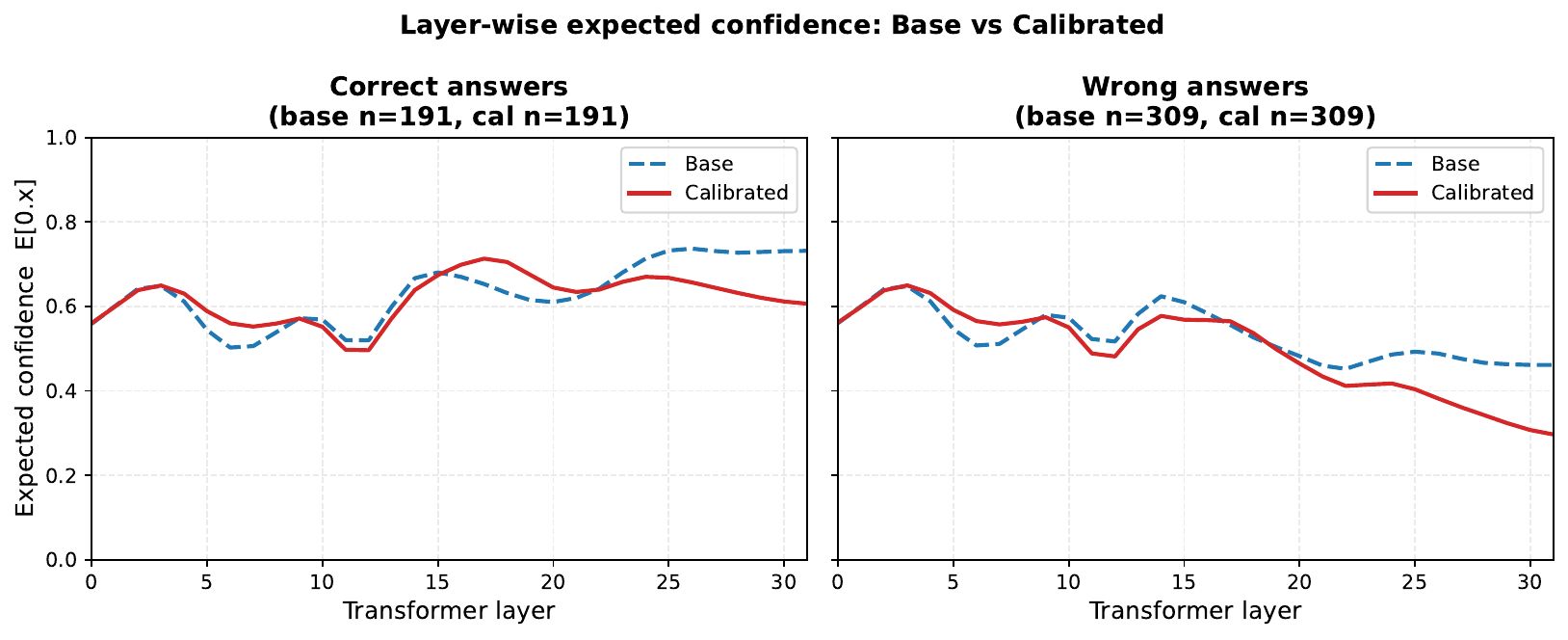}

    \caption{Detailed binned routing view for verbalized confidence calibration. In the base model, correct answers and many wrong answers both terminate with dominant mass in the \textsc{High} confidence bin. After calibration, low-confidence errors are redirected away from \textsc{High} and into \textsc{Low}, while correct answers remain more conservative. This makes the main mechanism visually explicit: calibration sharpens the late-stage mapping from hidden states to confidence outputs rather than uniformly lowering confidence everywhere.}
    \label{fig:app_calibration_routing}
\end{figure*}

\begin{figure}[htbp]
    \centering
    \begin{subfigure}[t]{0.48\linewidth}
        \centering
        \includegraphics[width=\linewidth]{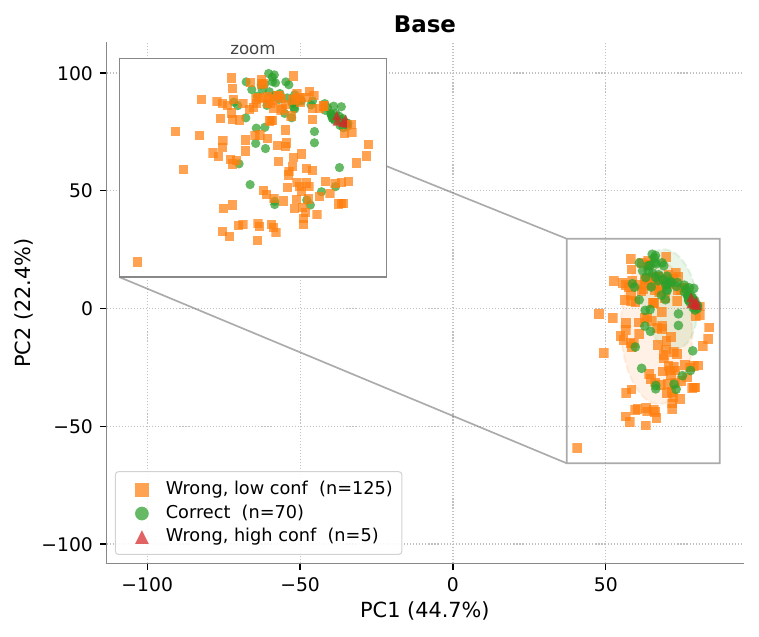}
        \caption{Base model}
        \label{fig:pca_group_base}
    \end{subfigure}
    \hfill
    \begin{subfigure}[t]{0.48\linewidth}
        \centering
        \includegraphics[width=\linewidth]{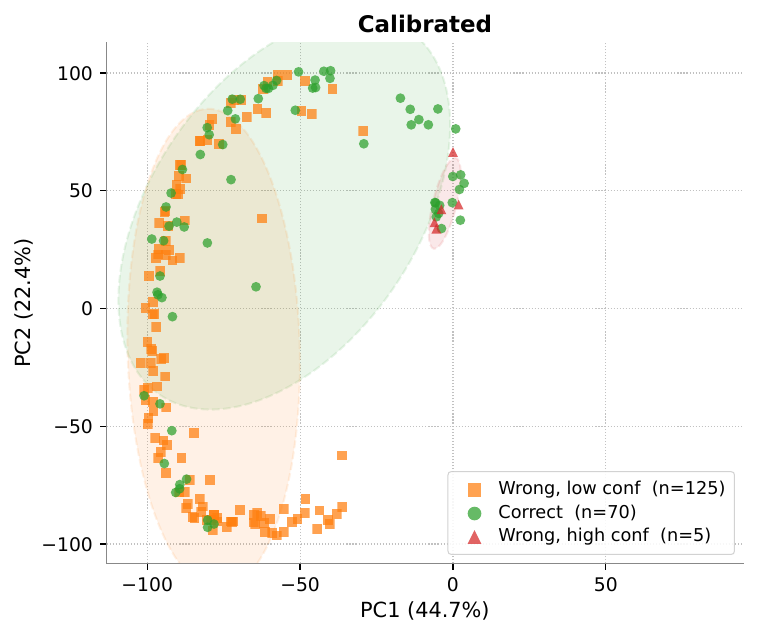}
        \caption{Calibrated model}
        \label{fig:pca_group_cal}
    \end{subfigure}
    \caption{Final-layer PCA of the confidence-token hidden state, grouped by outcome type. In the base model, correct and wrong low-confidence examples remain substantially mixed. After calibration, the representation aligns more cleanly with outcome structure: wrong low-confidence cases concentrate on the low-confidence side of the structure, while correct examples occupy the higher-confidence region more consistently.}
    \label{fig:pca_group_appendix}
\end{figure}
\begin{table}[htbp]
  \centering
  \small
  \caption{Calibration metrics for Llama-3-8B before and after calibration training.}
  \label{tab:calibration}
  \begin{tabular}{lrr}
    \toprule
    Metric & Llama-3-8B (base) & Llama-3-8B (calibrated) \\
    \midrule
    Accuracy ($\uparrow$)                            & 0.345 & 0.358 \\
    Avg.\ verbalized confidence                      & 0.869 & 0.403 \\
    Overconfidence gap (conf $-$ acc) ($\downarrow$) & +0.523 & +0.045 \\
    ECE ($\downarrow$)                               & 0.383 & 0.049 \\
    Brier score ($\downarrow$)                       & 0.504 & 0.166 \\
    NLL (confidence) ($\downarrow$)                  & 4.987 & 0.498 \\
    Parse rate ($\uparrow$)                          & 0.996 & 1.000 \\
    \bottomrule
  \end{tabular}
\end{table}

\begin{figure}[htbp]
    \centering
    \begin{subfigure}[t]{0.48\linewidth}
        \centering
        \includegraphics[width=\linewidth]{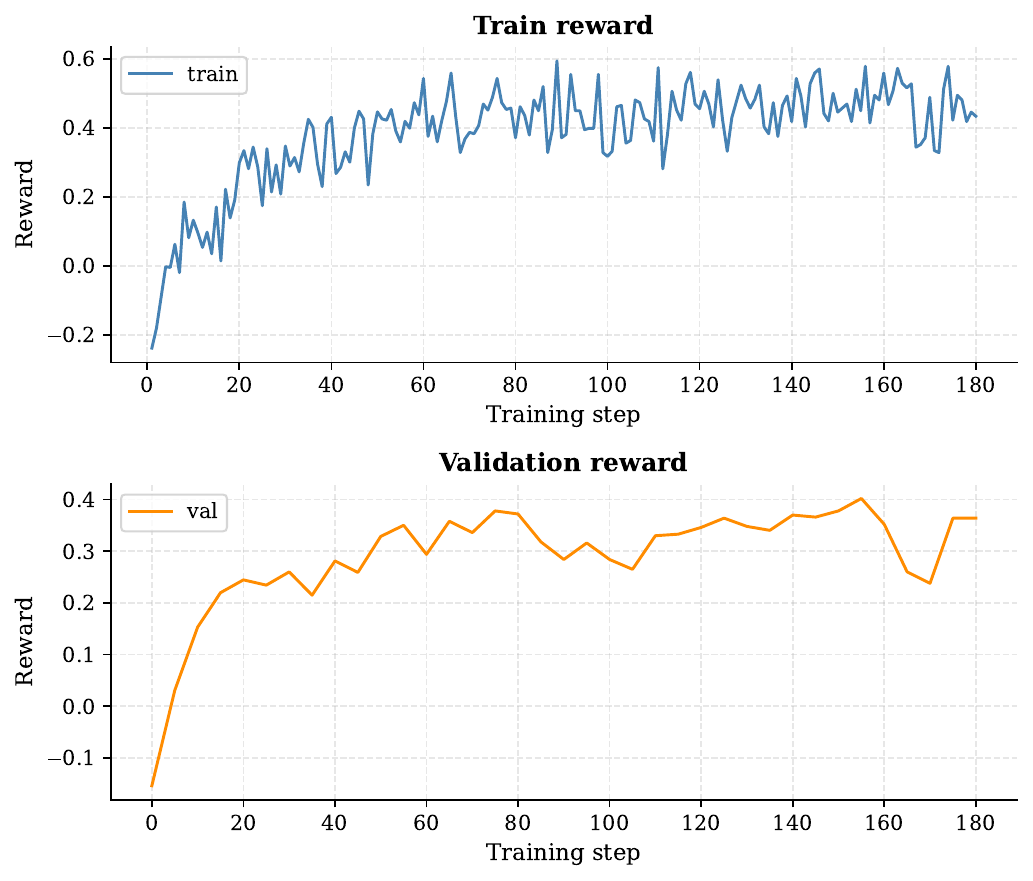}
        \caption{Training and validation reward curves over training steps.}
        \label{fig:training_curve}
    \end{subfigure}
    \hfill
    \begin{subfigure}[t]{0.48\linewidth}
        \centering
        \includegraphics[width=\linewidth]{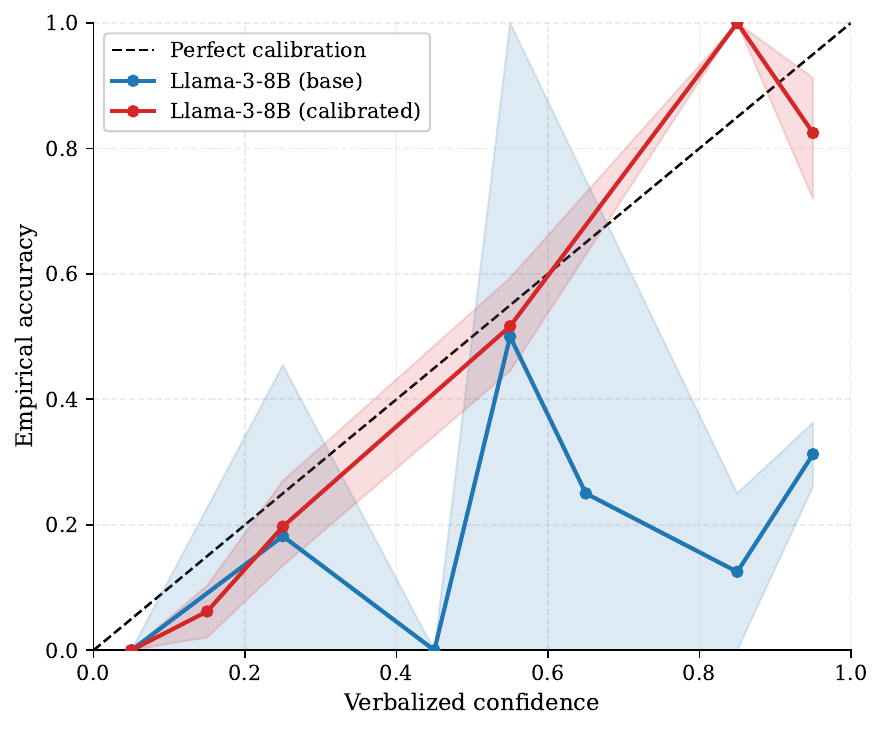}
        \caption{Reliability diagrams for the base and calibrated Llama-3-8B models. Perfect calibration corresponds to the diagonal dashed line.}
        \label{fig:reliability_diagram}
    \end{subfigure}
    \caption{Training dynamics and calibration quality of the calibrated Llama-3-8B model.}
    \label{fig:training_and_reliability}
\end{figure}
\begin{table*}[htbp]
\centering
\scriptsize
\setlength{\tabcolsep}{3.8pt}
\resizebox{\linewidth}{!}{%
\begin{tabular}{l c cc cc cc cc cc}
\toprule
& & \multicolumn{2}{c}{\textbf{Accuracy}} & \multicolumn{2}{c}{\textbf{ECE}} & \multicolumn{2}{c}{\textbf{AUSC}} & \multicolumn{2}{c}{\textbf{Overconf.}} & \multicolumn{2}{c}{\textbf{Conf. on wrong}} \\
\cmidrule(lr){3-4}\cmidrule(lr){5-6}\cmidrule(lr){7-8}\cmidrule(lr){9-10}\cmidrule(lr){11-12}
\textbf{Dataset} & \(n\) & \textbf{Base} & \textbf{Cal.} & \textbf{Base} & \textbf{Cal.} & \textbf{Base} & \textbf{Cal.} & \textbf{Base} & \textbf{Cal.} & \textbf{Base} & \textbf{Cal.} \\
\midrule
2WikiMultihopQA & 500 & 22.4 & \textbf{28.0} & 0.407 & \textbf{0.164} & 0.267 & \textbf{0.458} & 84.8 & \textbf{13.3} & 0.823 & \textbf{0.343} \\
HotpotQA        & 500 & \textbf{35.7} & 35.6 & 0.459 & \textbf{0.085} & 0.477 & \textbf{0.536} & 92.2 & \textbf{4.0}  & 0.854 & \textbf{0.293} \\
MuSiQue         & 500 & 12.1 & \textbf{14.4} & 0.543 & \textbf{0.089} & 0.126 & \textbf{0.224} & 84.6 & \textbf{0.0}  & 0.809 & \textbf{0.207} \\
NQ              & 500 & 46.5 & \textbf{48.2} & 0.367 & \textbf{0.033} & 0.560 & \textbf{0.625} & 96.3 & \textbf{0.4}  & 0.888 & \textbf{0.426} \\
TriviaQA        & 500 & 63.9 & \textbf{68.4} & 0.242 & \textbf{0.211} & 0.749 & \textbf{0.838} & 94.4 & \textbf{0.6}  & 0.866 & \textbf{0.329} \\
\midrule
\textbf{Aggregate} & \textbf{2500} & \textbf{35.4} & \textbf{38.0} & \textbf{0.408} & \textbf{0.119} & \textbf{0.430} & \textbf{0.526} & \textbf{89.9} & \textbf{3.5} & \textbf{0.869} & \textbf{0.360} \\
\bottomrule
\end{tabular}%
}
\caption{Dataset-level summary for verbalized-confidence calibration. ``Overconf.'' denotes the fraction of wrong answers with confidence \(>0.5\).}
\label{tab:vc_dataset_summary}
\end{table*}

\begin{table*}[!htbp]
\centering
\scriptsize

\begin{subtable}[t]{0.35\textwidth}
\centering
\setlength{\tabcolsep}{4pt}
\resizebox{\linewidth}{!}{%
\begin{tabular}{lcc}
\toprule
\textbf{Band} & \textbf{Base} & \textbf{Cal.} \\
\midrule
$ c > 0.7 $ & 1482 (88.6\%) & 63 (3.9\%) \\
$ 0.5 < c \leq 0.7 $ & 63 (3.8\%) & 500 (31.0\%) \\
$ 0.3 < c \leq 0.5 $ & 3 (0.2\%) & 1 (0.1\%) \\
$ 0.1 < c \leq 0.3 $ & 45 (2.7\%) & 512 (31.8\%) \\
$ c \leq 0.1 $ & 79 (4.7\%) & 535 (33.2\%) \\
\midrule
Mean conf on err. & 0.837 & 0.306 \\
Median conf on err. & 0.900 & 0.200 \\
\bottomrule
\end{tabular}
}
\caption{Error bands of wrong answers.}
\label{tab:vc_error_bands}
\end{subtable}
\hfill
\begin{subtable}[t]{0.61\textwidth}
\centering
\setlength{\tabcolsep}{3pt}
\resizebox{\linewidth}{!}{%
\begin{tabular}{lcccccc}
\toprule
\textbf{Dataset} & \textbf{Base Epist.} & \textbf{Cal. Epist.} & \textbf{Conv.} & \textbf{Base $c>0.5$} & \textbf{Cal. $c>0.5$} & \textbf{Primary wrong mode} \\
\midrule
2WikiMultihopQA & 88.4 & 38.6 & -49.8 & 84.8 & 13.3 & $0.1$--$0.3$ + $\leq 0.1$ \\
HotpotQA        & 95.3 & 29.2 & -66.1 & 92.2 & 4.0  & $0.1$--$0.3$ \\
MuSiQue         & 90.1 & 19.2 & -70.9 & 84.6 & 0.0  & $\leq 0.1$ \\
NQ              & 98.1 & 60.2 & -38.0 & 96.3 & 0.4  & $0.5$--$0.7$ \\
TriviaQA        & 96.7 & 37.3 & -59.4 & 94.4 & 0.6  & $0.1$--$0.3$ \\
\bottomrule
\end{tabular}%
}
\caption{Per-dataset error conversion.}
\label{tab:vc_dataset_conversion}
\end{subtable}

\medskip
\begin{subtable}[t]{0.43\textwidth}
\centering
\setlength{\tabcolsep}{4pt}
\resizebox{\linewidth}{!}{%
\begin{tabular}{lccc}
\toprule
\textbf{Model / Dataset} & \textbf{Mean Conf (Correct)} & \textbf{Mean Conf (Wrong)} & \textbf{Sep.} \\
\midrule
Baseline (overall) & $\sim 0.928$ & 0.837 & $+0.091$ \\
\midrule
2WikiMultihopQA & 0.704 & 0.343 & $+0.361$ \\
HotpotQA        & 0.534 & 0.293 & $+0.241$ \\
MuSiQue         & 0.389 & 0.207 & $+0.182$ \\
NQ              & 0.581 & 0.426 & $+0.155$ \\
TriviaQA        & 0.545 & 0.329 & $+0.216$ \\
\bottomrule
\end{tabular}%
}
\caption{Confidence separation.}
\label{tab:vc_conf_separation}
\end{subtable}
\hfill
\begin{subtable}[t]{0.51\textwidth}
\centering
\setlength{\tabcolsep}{3.5pt}
\resizebox{\linewidth}{!}{%
\begin{tabular}{lccc}
\toprule
\textbf{Metric} & \textbf{Baseline} & \textbf{Calibrated} & $\Delta$ \\
\midrule
Corr.\((\text{greedy conf}, \text{pass rate})\) & 0.180 & \textbf{0.524} & +0.344 \\
Corr.\((\text{mean sampled conf}, \text{pass rate})\) & 0.311 & \textbf{0.561} & +0.250 \\
Mean within-question conf std & 0.121 & \textbf{0.062} & -0.059 \\
Pass rate at conf $\geq 0.7$ & 0.236 & \textbf{0.781} & +0.545 \\
Pass rate at conf $< 0.3$ & 0.071 & 0.100 & +0.029 \\
High--low pass-rate gap & 0.165 & \textbf{0.681} & +0.516 \\
\bottomrule
\end{tabular}%
}
\caption{Question-level consistency.}
\label{tab:confidence_consistency_summary}
\end{subtable}

\medskip
\begin{subtable}[t]{0.46\textwidth}
\centering
\setlength{\tabcolsep}{5pt}
\resizebox{\linewidth}{!}{%
\begin{tabular}{lccc}
\toprule
\textbf{Confidence bin} & \textbf{N} & \textbf{Pass rate} & \textbf{Cal. gap} \\
\midrule
$[0.0,0.1)$ & 22  & 0.000 & \phantom{-}0.000 \\
$[0.1,0.2)$ & 97  & 0.066 & -0.034 \\
$[0.2,0.3)$ & 137 & 0.140 & -0.060 \\
$[0.5,0.7)$ & 182 & 0.376 & -0.224 \\
$[0.7,0.9)$ & 5   & 0.800 & \phantom{-}0.000 \\
$[0.9,1.0)$ & 57  & 0.779 & -0.121 \\
\bottomrule
\end{tabular}
}
\caption{Residual calibration by confidence bin.}
\label{tab:confidence_bin_breakdown}
\end{subtable}

\caption{Supplementary diagnostics for verbalized-confidence calibration.}
\label{tab:vc_consistency_and_error_group}
\label{tab:vc_separation_consistency_group}
\label{tab:confidence_consistency_group}
\end{table*}

\subsection{Additional Results for Reasoning-Time Signaling}
\label{app:local_uncertainty_details}

Section~\ref{sec:local-interface} in the main text emphasizes the end-to-end behavioral story: the model surfaces more failures early enough for intervention, and a downstream probe can turn those emissions into useful retrieval triggers. This subsection retains the broader six-task factual evaluation, the layer-sweep evidence for the probe, and the emitted-subset composition used to interpret those results.

\begin{table*}[t]
\centering
\scriptsize
\setlength{\tabcolsep}{3.8pt}
\begin{tabular}{lccc cc cc cc}
\toprule
& \multicolumn{3}{c}{\textbf{Accuracy}} 
& \multicolumn{2}{c}{\textbf{Answer Line}} 
& \multicolumn{2}{c}{\textbf{Emit Rate}} 
& \multicolumn{2}{c}{\textbf{Wrong / Correct + Emit}} \\
\cmidrule(lr){2-4} \cmidrule(lr){5-6} \cmidrule(lr){7-8} \cmidrule(lr){9-10}
\textbf{Dataset} 
& \textbf{Base} & \textbf{Calibrated} & \(\Delta\)
& \textbf{Base} & \textbf{Calibrated}
& \textbf{Base} & \textbf{Calibrated}
& \textbf{W+E} & \textbf{C+E} \\
\midrule
2WikiMultihopQA & 12.4 & 25.6 & +13.2 & 51.4 & 100.0 & 43.2 & 59.8 & 43.2 / 55.4 & 0.0 / 4.4 \\
HotpotQA        & 22.0 & 27.8 & +5.8  & 62.8 & 99.6  & 36.6 & 70.8 & 35.8 / 59.0 & 0.8 / 11.8 \\
MuSiQue         & 4.2  & 6.6  & +2.4  & 50.8 & 100.0 & 49.8 & 94.6 & 49.8 / 89.0 & 0.0 / 5.6 \\
NQ              & 21.6 & 40.8 & +19.2 & 77.0 & 100.0 & 23.6 & 56.6 & 23.0 / 39.6 & 0.6 / 17.0 \\
TriviaQA        & 36.2 & 56.0 & +19.8 & 72.0 & 100.0 & 31.8 & 44.2 & 30.2 / 31.6 & 1.6 / 12.6 \\
\midrule
\textbf{Macro Avg.} & \textbf{17.67} & \textbf{28.53} & \textbf{+10.86}
& \textbf{58.90} & \textbf{99.93}
& \textbf{38.53} & \textbf{68.87}
& \textbf{37.97 / 58.70} & \textbf{0.57 / 10.17} \\
\bottomrule
\end{tabular}
\caption{Base model vs.\ calibrated model on six factual reasoning datasets. ``Answer Line'' is the fraction of responses containing an explicit final answer line. ``Emit Rate'' is the overall \texttt{<uncertain>} emission rate. Each entry in the last two columns is \textit{Base / Calibrated}: the fraction of wrong answers co-occurring with emission (W+E) and the fraction of correct answers co-occurring with emission (C+E). All values are percentages.}
\label{tab:factual_six_task_compare_combined}
\end{table*}

\begin{table*}[t]
\centering
\small
\begin{subtable}[t]{0.60\textwidth}
\centering
\setlength{\tabcolsep}{4pt}
\resizebox{\linewidth}{!}{%
\begin{tabular}{lccccc}
\toprule
\textbf{Layer} & \textbf{Dev AUROC} & \textbf{Dev AUPRC} & \textbf{Trigger Precision} & \textbf{Trigger Recall} & \textbf{Trigger F1} \\
\midrule
0   & 0.6136 & 0.8365 & 0.8008 & 0.8216 & 0.8111 \\
8   & 0.7291 & \textbf{0.8951} & 0.8040 & \textbf{0.8798} & 0.8402 \\
16  & \textbf{0.7382} & 0.8940 & \textbf{0.8324} & 0.8657 & \textbf{0.8487} \\
24  & 0.6915 & 0.8662 & 0.8171 & 0.8597 & 0.8379 \\
Final & 0.7371 & 0.8926 & 0.8190 & \textbf{0.8798} & 0.8483 \\
\bottomrule
\end{tabular}
}
\caption{Layer sweep on emitted dev examples.}
\label{tab:probe_layer_sweep}
\end{subtable}
\hfill
\begin{subtable}[t]{0.22\textwidth}
\centering
\setlength{\tabcolsep}{3pt}
\scriptsize
\resizebox{\linewidth}{!}{%
\begin{tabular}{lcc}
\toprule
\textbf{Statistic} & \textbf{Base} & \textbf{Cal.} \\
\midrule
Train emit cases & 1233 & 6334 \\
Dev emit cases & 133 & 649 \\
Wrong@emit (dev) & 0.9774 & 0.7689 \\
Correct@emit (dev) & 0.0226 & 0.2311 \\
Total wrong (dev) & 848 & 653 \\
\bottomrule
\end{tabular}
}
\caption{Emitted subset.}
\label{tab:probe_subset_stats}
\end{subtable}

\caption{Probe diagnostics for the \texttt{<uncertain>} marker.}
\label{tab:probe_diagnostics_group}
\end{table*}

\begin{figure}[t]
    \centering
    \includegraphics[width=0.58\linewidth]{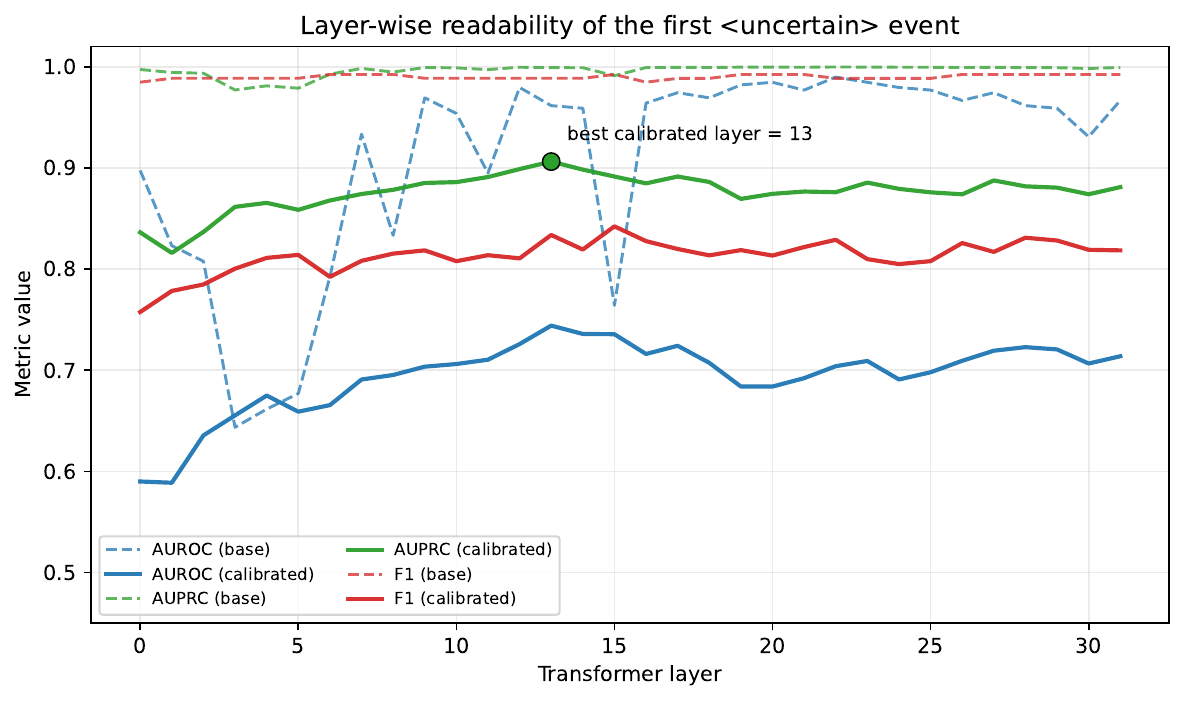}
    \caption{Layer-wise probe performance for retrieval triggering.}
    \label{fig:app_probe_layer_results}
\end{figure}

\subsection{Reward Ablations and Cross-Family Transfer}
\label{app:reward_ablation_transfer}

\paragraph{Reward ablations.}
Table~\ref{tab:reward_ablation_compact} summarizes the reward-design checks that
led to the final objectives. For verbalized confidence, the Brier scoring rule
\(2py-p^2\) with a decimal probability format gives the cleanest calibration
improvement: it sharply reduces ECE and overconfident wrong answers while
slightly improving exact-match accuracy. An integer-style confidence format
with the same proper scoring rule is weaker, suggesting that decimal
probabilities better match the intended semantics of calibrated confidence. For the
\texttt{<uncertain>} marker, the final asymmetric reward in Eq.~\eqref{eq:app-local-reward}
is chosen because it substantially increases recognized errors while preserving
answer quality. Earlier asymmetric rewards already moved the model in the right
direction, but surfaced fewer wrong answers; rewards that were too weak or too
format-oriented either encouraged over-emission or failed to create a useful
reasoning-time signal.

\begin{table*}[t]
\centering
\scriptsize
\setlength{\tabcolsep}{3pt}
\resizebox{\linewidth}{!}{%
\begin{tabular}{llccccc}
\toprule
\multicolumn{7}{c}{\textbf{A. Verbalized confidence reward ablation}} \\
\midrule
\textbf{Setting} & \textbf{Reward / format} & \textbf{Acc.} & \textbf{Brier} & \textbf{ECE} & \textbf{Overconf.} & \textbf{Gap} \\
\midrule
Base prompt & none, decimal & 24.5 & -0.337 & 0.647 & 89.4 & +0.111 \\
Final design & Brier, decimal & \textbf{27.4} & \textbf{+0.110} & \textbf{0.133} & \textbf{3.0} & +0.330 \\
Format variant & Brier, integer-style & 24.6 & +0.043 & 0.234 & 25.5 & \textbf{+0.611} \\
\midrule
\multicolumn{7}{c}{\textbf{B. Local marker reward ablation}} \\
\midrule
\textbf{Setting} & \textbf{Reward order \(\{\mathrm{C\bar E},\mathrm{CE},\mathrm{WE},\mathrm{W\bar E}\}\)} & \textbf{Acc.} & \textbf{Emit} & \textbf{Rec.-err.} & \textbf{Sep.} & \textbf{Len.} \\
\midrule
Base prompt & none & 29.7 & 30.0 & 39.3 & +31.2 & 568 \\
Early asymmetric & \(\{+5,+1,0,-1\}\) & \textbf{48.7} & 40.0 & 61.2 & \textbf{+45.2} & 412 \\
Final design & \(\{+5,+3.5,0,-2\}\) + spam penalty & 46.6 & \textbf{57.5} & \textbf{76.0} & +40.3 & 491 \\
\bottomrule
\end{tabular}%
}
\caption{Reward ablations for the two methods. Verbalized-confidence rows use a 500-example held-out set; \texttt{<uncertain>}-marker rows use a 1{,}100-example held-out set. Values are percentages except Brier, confidence gap, separation gap in percentage points, and response length. In Panel B, \(\mathrm{C}\)/\(\mathrm{W}\) denote correct/wrong and \(\mathrm{E}\)/\(\bar E\) denote marker emission/no emission.}
\label{tab:reward_ablation_compact}
\end{table*}

\paragraph{Cross-family transfer.}
Table~\ref{tab:cross_family_appendix} reports the same training recipes applied
to a Qwen-family backbone. The two panels use method-specific test sets and
metrics, so the numbers should not be compared across panels. The intended
claim is qualitative transfer: the same objectives induce useful self-assessment
behavior in both model families, although the learned equilibria differ. For
verbalized confidence, Qwen obtains very low ECE but uses a sparse confidence
distribution dominated by low-confidence outputs, whereas Llama uses a broader
range of confidence values. For the \texttt{<uncertain>} marker, both families reach
nearly identical recognized-error rates; Qwen emits slightly more selectively on
wrong rollouts, reflected in a larger separation gap but lower answer accuracy.

\begin{table*}[t]
\centering
\scriptsize
\setlength{\tabcolsep}{3pt}
\resizebox{\linewidth}{!}{%
\begin{tabular}{llcccccc}
\toprule
\multicolumn{8}{c}{\textbf{A. Verbalized confidence}} \\
\midrule
\textbf{Family} & \textbf{Recipe} & \textbf{Acc.} & \textbf{Brier} & \textbf{ECE} & \textbf{Overconf.} & \textbf{Gap} & \textbf{Conf. support} \\
\midrule
Llama-3-8B & Brier GRPO & 27.4 & +0.110 & 0.133 & 3.0 & +0.330 & 7 values \\
Qwen2.5-7B & Brier GRPO & 21.1 & +0.039 & \textbf{0.039} & 7.2 & \textbf{+0.636} & 8 values \\
\midrule
\multicolumn{8}{c}{\textbf{B. \texttt{<uncertain>} marker}} \\
\midrule
\textbf{Family} & \textbf{Recipe} & \textbf{Acc.} & \textbf{Emit} & \textbf{Rec.-err.} & \textbf{Sep.} & \textbf{Mean len.} & \textbf{CNE / CE / WE / WNE} \\
\midrule
Llama-3.1-8B & marker GRPO & \textbf{47.1} & 57.7 & 72.9 & +32.7 & 478 & 27.9 / 19.2 / 38.5 / 14.4 \\
Qwen2.5-7B & marker GRPO & 41.7 & 55.6 & \textbf{72.9} & \textbf{+40.8} & 228 & 28.6 / 13.1 / 42.5 / 15.8 \\
\bottomrule
\end{tabular}%
}
\caption{Cross-family robustness results. Panel A uses the verbalized-confidence held-out set (\(n=500\)); Panel B uses the \texttt{<uncertain>}-marker held-out set (\(n=1100\)). Values are percentages except Brier, confidence gap, separation gap in percentage points, and mean response length. CNE, CE, WE, and WNE denote correct/no-emission, correct/emission, wrong/emission, and wrong/no-emission.}
\label{tab:cross_family_appendix}
\end{table*}

\section{Experimental Setup}
\label{app:experimental-setup}
\label{app:training_eval_details}

This appendix records the experimental configuration needed to reproduce the
main \texttt{<uncertain>}-marker training runs. Code, configuration files, and trained
checkpoints will be released with the submission artifact after
de-anonymization. Training uses \texttt{verl}/HybridFlow~\cite{sheng2024hybridflow}
with vLLM v0.8.5, Hugging Face Transformers, bfloat16 precision, and FSDP2 on
2 NVIDIA H100 80GB GPUs. The main experiments use
\texttt{meta-llama/Llama-3.1-8B-Instruct}; we also run
\texttt{Qwen/Qwen2.5-7B-Instruct} as a cross-family robustness check.

The \texttt{<uncertain>}-marker training data contains 11{,}000 training prompts and a
1{,}100-example held-out dev split derived from TriviaQA-style factual prompts
with during-reasoning self-assessment traces. The same prompt set is used for supervised
warm-start training and GRPO post-training; the held-out dev split is used for
the reported \texttt{<uncertain>}-marker evaluation. Per-trajectory reward is rule based,
combining final-answer correctness, defined as token-F1 \(>0.5\) or exact match,
with whether the decoded response contains \texttt{<uncertain>}:

\begin{equation}
\label{eq:app-local-reward}
r(y,y^\star)=
\begin{cases}
+5.0 & \text{if correct and no emit}, \\
+3.5 & \text{if correct and emit}, \\
\phantom{+}0.0 & \text{if wrong and emit}, \\
-2.0 & \text{if wrong and no emit}.
\end{cases}
\end{equation}
A spam penalty of \(-0.5\) per extra emission, capped at \(-2.0\), is applied
when a response contains more than two \texttt{<uncertain>} emissions.

The supervised warm-start stage uses AdamW with learning rate
\(1.0\times10^{-5}\), linear warmup ratio \(0.05\), gradient clipping at \(1.0\),
2 epochs, global batch size 256, micro-batch size 4 per GPU, maximum sequence
length 2048, left truncation, seed 42, bf16 precision, FSDP2, and masked
cross-entropy on assistant turns only. Checkpoints are saved every 42 steps.

GRPO is then applied after warm start. We use AdamW with actor learning rate
\(1.0\times10^{-6}\), KL coefficient \(\beta=0.01\), token-level \(k_1\) KL,
global prompt batch size 256, PPO mini-batch size 64, PPO micro-batch size 16
per GPU, maximum prompt length 1024, maximum response length 512, vLLM rollout
with tensor-parallel size 2, rollout temperature 1.0, top-\(p=0.95\), rollout
group size \(n=1\), reference log-prob micro-batch size 32 per GPU, gradient
checkpointing, and padding removal. We train for 5 epochs
(\(\approx 43\) steps per epoch, \(\approx 215\) total steps). Validation runs
every 5 steps and checkpoints are saved every 50 steps. We do not use a separate critic
(\texttt{algorithm.adv\_estimator=grpo}); advantages are computed group-relative
within the batch.

All \texttt{<uncertain>}-marker evaluation results use greedy vLLM inference on the held-out
1{,}100-example dev split with temperature \(0.0\), tensor-parallel size 1, GPU
memory utilization 0.85, maximum model length 4096, maximum response length 512,
and decoded outputs preserving literal \texttt{<uncertain>} emissions. GRPO runs
do not set a fresh seed beyond process-level PyTorch/vLLM defaults. Reported
metrics are single-run; we did not observe systematic variation across
relaunches at fixed configuration, but did not perform formal seed-variance
studies.

\subsection{Baseline Implementation Details}
\label{app:baseline_impl_details}

This subsection records the concrete implementations behind
Table~\ref{tab:combined_baselines}. The two panels are evaluated on separate
held-out sets. Panel A (verbalized confidence) uses the 2WikiMultihopQA verbalized
confidence evaluation set (\(n=500\)); the model always emits an answer and a
decimal confidence \(p \in [0,1]\), and we report accuracy, Brier reward, ECE, and the rate of overconfident wrong
answers. Panel B (\texttt{<uncertain>} marker) uses the counterfactual
\texttt{<uncertain>} evaluation set (\(n=1100\)); each method produces a binary
trigger analogous to uncertainty emission, and we report trigger rate, trigger
precision, trigger recall, untouched-set accuracy, and the wrong rate within
triggered examples.

\paragraph{Panel A: Verbalized confidence.}
\textbf{Base.} The uncalibrated Llama-3.1-8B-Instruct model is prompted with the
shared verbalized-confidence template and its native emitted decimal confidence is
used directly.
\textbf{P(True).} We first generate a standard answer, then re-query the same
model with a binary correctness prompt asking whether its own proposed answer is
correct. The confidence score is computed from the normalized probability mass
assigned to affirmative versus negative tokens, and replaces the original
\texttt{Confidence:} value.
\textbf{Global TS.} A single scalar temperature is fit on the base model's
training predictions by minimizing Bernoulli negative log-likelihood in logit
space, then applied post hoc to the base model's test confidences.
\textbf{ATS.} Adaptive temperature scaling predicts an example-specific
temperature from lightweight response features, including the raw confidence
logit, response length, answer length, and reasoning depth; the feature weights
are fit on base-model outputs with L2-regularized Bernoulli NLL.
\textbf{SFT-Conf.} This supervised baseline fine-tunes the model to reproduce the
base model's reasoning and answer text while replacing the final confidence with
a clipped token-F1-derived target in \([0.05, 0.95]\). Training uses full
fine-tuning on roughly \(9.5\)K base-model generations collected from five QA
datasets.
\textbf{SFT-KWDK.} This variant uses the same data and training pipeline as
SFT-Conf, but replaces the continuous F1 target with a four-bucket confidence
mapping. It tests whether coarse uncertainty supervision is sufficient, as
opposed to continuous confidence regression.

\paragraph{Panel B: \texttt{<uncertain>} marker.}
\textbf{Emit heuristic.} The base Llama-3.1-8B-Instruct model is prompted with
the same \texttt{<uncertain>} instruction used for GRPO training, and a trigger
is fired whenever the literal token string appears anywhere in the greedy
response.
\textbf{Hidden probe.} We extract a hidden-state representation from the base
model at a designated pre-answer readout position, fit a logistic regression
probe to predict wrongness, choose the best layer by development AUPRC, and tune
the decision threshold on held-out development data.
\textbf{Output classifier.} This baseline fits logistic regression on surface
response features only, including response length, reasoning-line count,
hedging cues, and whether \texttt{<uncertain>} already appears. It tests whether
the marker training can be matched by shallow textual signals without
access to model internals.
\textbf{SELF-RAG.} We use the public Self-RAG checkpoint and interpret the
model's internal \texttt{[Retrieval]} control token as the binary uncertainty
trigger. Retrieval is not actually executed in this baseline; only the signal
quality of the trigger is evaluated.
\textbf{FLARE.} FLARE inspects first-pass token probabilities and triggers if
any token within the look-ahead window falls below a fixed probability
threshold. In our implementation the threshold is \(0.4\), and the baseline is
evaluated as a pure trigger policy without downstream retrieval.
\textbf{ADARAGUE.} ADARAGUE is an adaptive retrieval pipeline that uses the uncertainty metrics as features(such as Max Entropy, Semantic Similarity) fed into a controller to decide whether to retrieve external evidence. In Panel B
we map its retrieval decision to the same binary trigger protocol, so it serves
as a retrieval-oriented uncertainty baseline rather than a pure detector.

\paragraph{Shared implementation choices.}
All baseline generations use greedy decoding with vLLM. All methods except
SELF-RAG share the same Llama-3.1-8B-Instruct base model. Post-hoc verbal
methods are fit and evaluated only on the base model's emitted confidences,
whereas the SFT baselines retrain the generator. For Panel B, all baselines are
evaluated under the same binary-trigger protocol and the same relaxed
correctness criterion, so the comparison isolates the quality of the control
signal rather than differences in answer extraction or evaluation code.

\newpage
\section{Epistemic Error and Aleatoric Error Examples}
\subsection{Judge Prompt for Epistemic vs.\ Aleatoric Error Classification}
\label{app:judge_error_prompt}

\begin{tcolorbox}[
  colback=gray!4,
  colframe=black!60,
  title={LLM Judge Prompt for Error-Type Classification},
  fonttitle=\bfseries,
  boxrule=0.6pt,
  arc=2pt,
  left=6pt,
  right=6pt,
  top=6pt,
  bottom=6pt,
  breakable
]
\small
You are evaluating a factual QA model response to classify the nature of its error.

The model answered the question \textbf{INCORRECTLY}. Your task is to read the response and determine whether the error is \textbf{EPISTEMIC} or \textbf{ALEATORIC}, based on the content and tone of the reasoning --- \textbf{NOT} the confidence number at the end.

\medskip
\textbf{DEFINITIONS:}

\textbf{EPISTEMIC} --- The model's reasoning is confident and assertive. It presents a definitive chain of reasoning with no hedging, no expressed doubt, and no acknowledgement that it might be wrong. The model ``thinks it knows'' even though it is wrong. Look for: confident assertions (``The answer is\ldots'', ``According to X, it is clearly\ldots'', ``I know that\ldots''), absence of uncertainty markers, and committed factual claims stated as certain.

\textbf{ALEATORIC} --- The model's reasoning expresses genuine uncertainty. It acknowledges it may not know, hedges its answer, or explicitly signals it is guessing. The model ``knows it doesn't know.'' Look for: hedging (``I'm not entirely sure\ldots'', ``I think it might be\ldots'', ``I'm not certain''), explicit admissions of ignorance (``I couldn't find information on this'', ``I don't have enough information''), or speculative language (``probably'', ``perhaps'', ``I believe but am not sure'').

\medskip
\textbf{IMPORTANT:} Ignore the final Confidence number. Judge solely on the reasoning text.

\medskip
\textbf{Question:} \verb|{question}|

\medskip
\textbf{Model response:}
\begin{verbatim}
{response}
\end{verbatim}

\textbf{Gold answer:} \verb|{gold_answer}|

\medskip
Respond in this exact format:
\begin{verbatim}
Classification: EPISTEMIC | ALEATORIC
Reasoning: <one sentence explaining the key signal in the response text>
\end{verbatim}
\end{tcolorbox}

\subsection{Qualitative Examples of Epistemic and Aleatoric Errors}
\label{app:error_examples}
\begin{figure*}[htbp]
\centering
\footnotesize

\begin{tcolorbox}[
  colback=red!4,
  colframe=red!55!black,
  title={Example 1: Baseline epistemic error — confident hallucination (HotpotQA)},
  fonttitle=\bfseries,
  width=0.96\textwidth,
  boxrule=0.6pt
]
\textbf{Question:} \textit{``Glad to Be Unhappy'' is a song composed by what American?}\\
\textbf{Gold answer:} Richard Charles Rodgers \hfill
\textbf{Prediction:} Randy Newman \hfill
\textbf{Confidence:} \(0.9\)

\smallskip
\textbf{Model response:}\\
\textit{%
  To answer this question, I'll do some research\ldots\\
  According to my sources, ``Glad to Be Unhappy'' is a song composed by Randy Newman.\\
  Answer: Randy Newman \quad Confidence: 0.9%
}

\smallskip
\textbf{Error type:} \textcolor{red!70!black}{\textbf{Epistemic}} —
the model asserts a specific, factually incorrect composer
with no hedging or uncertainty in the reasoning text.
The phrase \textit{``According to my sources''} signals false confidence:
the model presents a retrieved-sounding fact it does not actually possess,
a hallucination that calibration training must suppress.
\end{tcolorbox}

\vspace{0.35em}

\begin{tcolorbox}[
  colback=orange!5,
  colframe=orange!65!black,
  title={Example 2: Residual epistemic error in the calibrated model — wrong intermediate fact (2WikiMultiHopQA)},
  fonttitle=\bfseries,
  width=0.96\textwidth,
  boxrule=0.6pt
]
\textbf{Question:} \textit{Are the films \emph{Je Suis N\'e D'Une Cigogne} and
\emph{La Chair De L'Orchid\'ee} from the same country?}\\
\textbf{Gold answer:} Yes \hfill
\textbf{Prediction:} No \hfill
\textbf{Confidence:} \(0.9\)

\smallskip
\textbf{Model response:}\\
\textit{%
  Step 1: Identify the countries of origin for each film.\\
  \phantom{Step 1: }\textbullet~\emph{Je Suis N\'e D'Une Cigogne} is from France.\\
  \phantom{Step 1: }\textbullet~\emph{La Chair De L'Orchid\'ee} is from Belgium.\\
  Step 2: Since France and Belgium are not the same country, the answer is No.\\
  Answer: No \quad Confidence: 0.9%
}

\smallskip
\textbf{Error type:} \textcolor{orange!80!black}{\textbf{Residual epistemic}} —
calibration substantially reduces but cannot fully eliminate confident errors.
Here, the second film is misattributed to Belgium (both are French productions),
yet the reasoning chain is internally coherent and therefore sustains high confidence.
This is the characteristic failure mode for in-distribution multi-hop questions:
a wrong intermediate fact propagates to a wrong but confidently stated conclusion.
\end{tcolorbox}

\vspace{0.35em}

\begin{tcolorbox}[
  colback=blue!4,
  colframe=blue!55!black,
  title={Example 3: Aleatoric error in the calibrated model — explicit uncertainty signal (Natural Questions)},
  fonttitle=\bfseries,
  width=0.96\textwidth,
  boxrule=0.6pt
]
\textbf{Question:} \textit{When did Spanish Town become Jamaica's capital?}\\
\textbf{Gold answer:} 1534 \hfill
\textbf{Prediction:} 1704 \hfill
\textbf{Confidence:} \(0.05\)

\smallskip
\textbf{Model response:}\\
\textit{%
  Step 1: Identify the location ``Spanish Town.''\\
  Step 2: According to my knowledge, Kingston is the capital of Jamaica.
  Spanish Town is a town in Jamaica,
  \underline{but I'm not sure if it has ever been the capital.}\\
  Answer: 1704 \quad Confidence: 0.05%
}

\smallskip
\textbf{Error type:} \textcolor{blue!65!black}{\textbf{Aleatoric}} —
despite producing an incorrect answer, the model's reasoning text
explicitly signals ignorance (underlined), independently of the final confidence number.
This is the target behaviour of calibration training:
the model hedges rather than hallucinating,
making its uncertainty available to downstream systems
for abstention or retrieval augmentation.
\end{tcolorbox}

\caption{%
  Qualitative examples of epistemic and aleatoric errors before and after
  GRPO calibration training.
  \textbf{Epistemic errors} (Examples 1--2) are wrong answers delivered with
  high stated confidence and no uncertainty signal in the response text;
  they represent knowledge gaps the model fails to recognise.
  \textbf{Aleatoric errors} (Example 3) are wrong answers accompanied by
  explicit hedging in the reasoning, indicating the model correctly identifies
  the limits of its knowledge.
  Error type is determined by the response content independently of the
  confidence number, motivating the use of an LLM judge (see \S\ref{app:judge_error_prompt})
  in addition to the verbalized confidence threshold.%
}
\label{fig:vc_qual_examples}
\end{figure*}

\begin{figure*}[htbp]
\centering
\footnotesize

\begin{tcolorbox}[
  colback=red!4,
  colframe=red!55!black,
  title={Example 1: Baseline epistemic error — absent uncertainty signal (HotpotQA)},
  fonttitle=\bfseries,
  width=0.96\textwidth,
  boxrule=0.6pt
]
\textbf{Question:} \textit{Who released the 2012 record ``Red''?}\\
\textbf{Gold answer:} Big Machine Records \hfill
\textbf{Prediction:} Taylor Swift \hfill
\textbf{Emitted \utok:} No

\smallskip
\textbf{Model response:}\\
\textit{%
  Let's break this down step by step!\\
  Red is an album by Taylor Swift, an American singer-songwriter.\\
  Answer: Taylor Swift%
}

\smallskip
\textbf{Error type:} \textcolor{red!70!black}{\textbf{Epistemic}} —
the model confuses the recording artist with the releasing label
and delivers the wrong answer without any \utok{} signal in the
reasoning chain.
Even though the system prompt instructs the model to emit \utok{}
when uncertain, the baseline has not learned to use it:
its output is indistinguishable from a confident correct answer.
\end{tcolorbox}

\vspace{0.35em}

\begin{tcolorbox}[
  colback=orange!5,
  colframe=orange!65!black,
  title={Example 2: Residual epistemic error in the calibrated model — marker absent despite wrong answer (TriviaQA)},
  fonttitle=\bfseries,
  width=0.96\textwidth,
  boxrule=0.6pt
]
\textbf{Question:} \textit{In which sport would you find positions called driver, bucket, hole, and point?}\\
\textbf{Gold answer:} Water polo \hfill
\textbf{Prediction:} Golf \hfill
\textbf{Emitted \utok:} No

\smallskip
\textbf{Model response:}\\
\textit{%
  Let's break this down step by step!\\
  The sport is golf.\\
  Answer: golf%
}

\smallskip
\textbf{Error type:} \textcolor{orange!80!black}{\textbf{Residual epistemic}} —
the model commits to a confidently wrong answer in a single step,
skipping any deliberation that could trigger \utok{}.
The extreme brevity of the response (no intermediate reasoning)
is itself diagnostic: there is no reasoning chain in which
uncertainty could surface.
This reflects a residual failure of the \utok{} training signal:
the model's prior confidence is so high for simple-seeming questions
that no token is inserted.
\end{tcolorbox}

\vspace{0.35em}

\begin{tcolorbox}[
  colback=blue!4,
  colframe=blue!55!black,
  title={Example 3: Aleatoric error in the calibrated model — \texttt{<uncertain>} emitted at knowledge boundary (MuSiQue)},
  fonttitle=\bfseries,
  width=0.96\textwidth,
  boxrule=0.6pt
]
\textbf{Question:} \textit{Where was the founder of the National Women's Party born?}\\
\textbf{Gold answer:} Mount Laurel Township \hfill
\textbf{Prediction:} \utok{} (abstain) \hfill
\textbf{Emitted \utok:} Yes

\smallskip
\textbf{Model response:}\\
\textit{%
  Let's break this down step by step!\\
  The National Women's Party was founded in 1916.
  I found that the party was founded by Alice Paul,
  an American suffragist and women's rights activist.\\
  However, I couldn't find the birthplace of Alice Paul.
  \utok{}\\
  According to some sources, Alice Paul was born in 1885,
  but I don't have the exact birthplace.\\
  Answer: \utok{}%
}

\smallskip
\textbf{Error type:} \textcolor{blue!65!black}{\textbf{Aleatoric}} —
the model correctly resolves the first reasoning hop
(National Women's Party $\to$ Alice Paul)
but reaches a genuine knowledge boundary on the second hop
(Alice Paul's birthplace).
The \utok{} token is emitted precisely at the step where knowledge
runs out, and the model propagates this uncertainty to the final answer
rather than hallucinating.
This is the intended behaviour:
a structured abstention that exposes the exact reasoning step
at which information is missing.
\end{tcolorbox}

\caption{%
  Qualitative examples for the \texttt{<uncertain>} marker,
  which signals uncertainty via an explicit reasoning-time marker rather than
  a verbalized confidence score.
  \textbf{Epistemic errors} (Examples 1--2) occur when the model produces
  a wrong answer without emitting \utok{} in the reasoning chain;
  the baseline (Example~1) never uses the token,
  while the trained model (Example~2) still fails to trigger it
  for short, over-confident responses.
  \textbf{Aleatoric errors} (Example 3) occur when \utok{} is correctly
  inserted at the knowledge boundary, signalling that the error
  is not a hallucination but an honest absence of information.
  Unlike the verbalized confidence setting (Fig.~\ref{fig:vc_qual_examples}),
  the uncertainty signal here is \emph{binary and localised}:
  the marker either appears or it does not,
  and its position in the chain-of-thought identifies the specific
  reasoning step that fails.
}
\label{fig:ut_qual_examples}
\end{figure*}

\subsection{Four-Way Examples for the \texttt{<uncertain>} Marker}
\label{app:samples-uncertain}

The \texttt{<uncertain>} marker is a binary signal inside the reasoning trace rather
than a scalar confidence attached to the final answer. The four examples below
show the resulting decision table: correct/wrong answer crossed with
marker/no-marker behavior. They illustrate the intended use case, the main
false-positive mode, and the residual silent-failure mode.

\paragraph{Correct answer, no marker.}
\begin{tcolorbox}[
  colback=green!3,
  colframe=green!45!black,
  title={Correct + no marker},
  fonttitle=\bfseries,
  width=0.96\textwidth,
  boxrule=0.5pt,
  breakable
]
\small
\textbf{Question:} Which element discovered in 1782 has the symbol Te?\\
\textbf{Gold answer:} Tellurium \hfill
\textbf{Prediction:} Tellurium \hfill
\textbf{Emitted \utok:} No

\smallskip
\textbf{Model response:}\\
{\ttfamily
Let's break this down step by step!\\
Te is the symbol for the element Tellurium, which was discovered in 1782 by Franz Muller from the copper ore found in the copper mines in the Ural Mountains.\\
Answer: Tellurium
}

\smallskip
\emph{The model commits to a correct answer without hedging. This is the desired no-intervention case: the marker is absent when the relevant fact is available to the model.}
\end{tcolorbox}

\paragraph{Wrong answer, marker emitted.}
\begin{tcolorbox}[
  colback=blue!3,
  colframe=blue!45!black,
  title={Wrong + marker emitted},
  fonttitle=\bfseries,
  width=0.96\textwidth,
  boxrule=0.5pt,
  breakable
]
\small
\textbf{Question:} Which country is the composer of the film \emph{Neeya 2} from?\\
\textbf{Gold answer:} Singapore \hfill
\textbf{Prediction:} Indian \hfill
\textbf{Emitted \utok:} Yes

\smallskip
\textbf{Model response:}\\
{\ttfamily
Let's break this down step by step!\\
Neeya 2 is a 2013 Tamil film composed by S. S. Thaman, who is a Indian composer.\\
The composer, S. S. Thaman, is a Indian composer who hails from India.\\
\utok{} However, I'm not sure about the country of origin of the composer, as there are many Indian composers.\\
Answer: Indian
}

\smallskip
\emph{The final answer is wrong, but the model flags uncertainty about the answer-critical premise. This is the intended failure mode: a downstream controller can identify the answer as low-trust even though the final string is fluent.}
\end{tcolorbox}

\paragraph{Correct answer, marker emitted.}
\begin{tcolorbox}[
  colback=orange!5,
  colframe=orange!60!black,
  title={Correct + marker emitted},
  fonttitle=\bfseries,
  width=0.96\textwidth,
  boxrule=0.5pt,
  breakable
]
\small
\textbf{Question:} When was the last time England won the World Cup?\\
\textbf{Gold answer:} 1966 \hfill
\textbf{Prediction:} 1966 \hfill
\textbf{Emitted \utok:} Yes

\smallskip
\textbf{Model response:}\\
{\ttfamily
Let's break this down step by step!\\
England has won the World Cup once, which was in 1966 when they hosted the tournament.\\
They played in the final against West Germany on July 30, 1966, and won the match 4-2 after extra time.\\
\utok{} However, it's not clear when they last won the World Cup, since they haven't won it since 1966.\\
Answer: 1966
}

\smallskip
\emph{The answer is correct, but the marker fires on a peripheral phrasing issue. This is the main false-positive mode: the model recognizes ambiguity even when the final answer remains correct.}
\end{tcolorbox}

\paragraph{Wrong answer, no marker.}
\begin{tcolorbox}[
  colback=red!3,
  colframe=red!45!black,
  title={Wrong + no marker},
  fonttitle=\bfseries,
  width=0.96\textwidth,
  boxrule=0.5pt,
  breakable
]
\small
\textbf{Question:} In show-jumping, how many faults does a competitor get for a refusal?\\
\textbf{Gold answer:} 3 \hfill
\textbf{Prediction:} 4 \hfill
\textbf{Emitted \utok:} No

\smallskip
\textbf{Model response:}\\
{\ttfamily
Let's break this down step by step!\\
A refusal is a fault in show jumping that occurs when the horse refuses to jump a fence, either by stopping or changing direction.\\
According to the rules of show jumping, a refusal is penalized with 4 faults.\\
Answer: 4
}

\smallskip
\emph{The model commits to an incorrect rule with no marker. This is the residual silent-failure mode: the \texttt{<uncertain>} marker can expose recognized uncertainty, but it cannot flag errors the model never recognizes as risky.}
\end{tcolorbox}

\section{Broader Impact Statement}
\label{app:impact}

This work aims to reduce confident hallucinations by training LLMs to expose self-assessment within their own responses.
Such signals may help downstream systems trigger retrieval, verification, abstention, or human review before acting on unreliable outputs.
However, explicit self-assessment can also create over-trust: a high confidence score is not a correctness guarantee, and the absence of \texttt{<uncertain>} does not imply that the answer is safe.
Our methods should therefore be used as control cues rather than standalone certificates of correctness.
Deployment in high-stakes domains would require additional domain-specific calibration, external verification, and human oversight.

\end{document}